\title{Quantifying Statistical Significance of \\ Neural Network-based Image Segmentation by \\ Selective Inference}
\author{%
  Vo Nguyen Le Duy \\
  Nagoya Institute of Technology and RIKEN\\
  \texttt{duy.mllab.nit@gmail.com} \\
   \And
   Shogo Iwazaki \\
  Nagoya Institute of Technology\\
  \texttt{iwazaki.s.mllab.nit@gmail.com} \\
   \And
   Ichiro Takeuchi\thanks{Corresponding author} \\
   Nagoya University and RIKEN \\
  \texttt{ichiro.takeuchi@mae.nagoya-u.ac.jp} \\
}
\begin{document}

\maketitle

\begin{abstract}
Although a vast body of literature relates to image segmentation methods that use deep neural networks (DNNs), less attention has been paid to assessing the statistical reliability of segmentation results.
In this study, we interpret the segmentation results as hypotheses driven by DNN (called DNN-driven hypotheses) and propose a method to quantify the reliability of these hypotheses within a statistical hypothesis testing framework. 
To this end, we introduce a conditional selective inference (SI) framework---a new statistical inference framework for data-driven hypotheses that has recently received considerable attention---to compute exact (non-asymptotic) valid $p$-values for the segmentation results. 
To use the conditional SI framework for DNN-based segmentation, we develop a new SI algorithm based on the homotopy method, which enables us to derive the exact (non-asymptotic) sampling distribution of DNN-driven hypothesis. 
We conduct several experiments to demonstrate the performance of the proposed method.
\end{abstract}

\section{Introduction}

Image segmentation is a fundamental task in computer vision, and it has been widely applied in many areas.
The goal of image segmentation is to assign a label to every pixel in an observed image, such that pixels with the same label share certain characteristics.
In the literature, numerous image segmentation algorithms have been developed based on thresholding \cite{otsu1979threshold}, 
region-growing \cite{nock2004statistical}, or graph cuts \cite{boykov2006graph, boykov2001interactive}.
However, over the past few years, image segmentation using deep neural networks (DNNs) has become a popular model that exhibits remarkable performance improvements \cite{long2015fully, badrinarayanan2017segnet, ronneberger2015u}.

Although a vast body of literature relates to DNN-based segmentation methods, less attention has been paid to evaluating the statistical reliability of the segmentation results.
In the absence of statistical reliability, it is difficult to manage the risk of obtaining incorrect segmentation results, which
%
might be harmful when they are used in high-stakes decision-making, such as medical diagnoses or automatic driving. Therefore, it is necessary to develop a \emph{valid} statistical inference method for data-driven DNN-based segmentation results that can properly control the risk of obtaining false positives.

Valid statistical inference on the DNN-based segmentation results is intrinsically difficult because the observed data is used twice: once for segmentation and once again for inference. This is often referred to as \emph{double dipping} \cite{kriegeskorte2009circular}.
In statistics, it has been recognized that naively computing $p$-values in double dipping is highly biased and correcting this bias is challenging, especially when the hypotheses are selected using complicated procedures such as DNN.
Our idea is to introduce the \emph{conditional Selective Inference (SI)} framework for resolving this challenge.
The basic idea behind conditional SI is to perform statistical inference conditional on the selection event (i.e., segmentation results), which allows us to derive the exact (non-asymptotic) sampling distribution of the test statistic for making the valid statistical inference. 

\paragraph{Related works.}

In the past few years, conditional SI has been recognized as a new promising approach by which to evaluate the statistical reliability of data-driven hypotheses, and it has been actively studied for inference on the features of linear models selected by several feature selection methods---for example, Lasso \cite{lee2016exact, liu2018more, le2021parametric}. 
The basic idea of SI is to make inferences conditional on the selection event, which allows us to derive the exact sampling distribution of the test statistic. 
In addition, SI has been applied to various problems such as change point detection \cite{umezu2017selective, hyun2018post, jewell2019testing, duy2020computing, valiollahi2021change, sugiyama2021valid}, outlier detection \cite{chen2019valid, tsukurimichi2021conditional}, generalized Lasso \cite{hyun2018exact, le2022more}, sequential feature selection \cite{tibshirani2016exact, sugiyama2020more}, and others \cite{suzumura2017selective, panigrahi2020integrative, das2021fast, gao2022selective, neufeld2022tree, chen2022selective, panigrahi2022approximate}.
However, no study to date provides conditional SI for DNNs.

The most closely related work (and the motivation for this study) is \cite{tanizaki2020computing}, where the authors provide a framework to compute valid $p$-values for image segmentation results obtained by threshold (TH)-based and graph cut (GC)-based segmentation algorithms.
The novel idea of \cite{tanizaki2020computing} is to consider the sampling distribution of the test statistic conditional on the selection event, in which the segmentation result is obtained. 
In that study, since the authors consider simple segmentation algorithms, the selection event is simply characterized by a set of linear or quadratic inequalities which is represented as a \emph{single polytope} in the data space.
Then, they can directly use the approach in the seminal conditional SI paper \cite{lee2016exact} to conduct the inference.
However, it is not the case of DNN because the selection event of the DNN is much more complicated than that of TH and GC.
Therefore, the method in \cite{tanizaki2020computing} is \emph{not} applicable in the case of DNN-based segmentation.

There are several types of problem setting with various network structures for DNN-based segmentation. 
In this study, we focus on the most standard one, the supervised segmentation setting with a simple structure (see \S \ref{sec:proposed_method} and \S \ref{sec:experiment}). 
As will be described in \S \ref{sec:preliminary}, our target is to conduct the statistical inference in the test phase (not in the training phase). 
Namely, our goal is to quantify the reliability of the segmentation result when a new test image is given to a DNN that has been trained in advance.

\paragraph{Contributions.}
Our main contributions are as follows:

$\bullet$ To the best of our knowledge, this is the first study to provide an exact (non-asymptotic) inference method for statistically quantifying the reliability of image segmentation results obtained from DNNs.

$\bullet$ We propose a \emph{homotopy method} to conduct a  powerful and efficient conditional SI for DNN-based segmentation tasks. 
In this study, we mainly focus on a standard convolutional neural network (CNN) as a working example and show that all basic operations in the standard CNN (e.g., convolution, max pooling) can be incorporated in our proposed homotopy-based conditional SI method.

$\bullet$ We undertake experiments on both synthetic and real-world datasets, through which we offer evidence that our proposed method can successfully control the False Positive Rate (FPR), has good performance in terms of computational efficiency, and provides good results in practical applications.

Our implementation is available at
\begin{center}
\url{https://github.com/vonguyenleduy/dnn_segmentation_selective_inference}
\end{center}

\section{Preliminary}\label{sec:preliminary}
In this section, we first review the conditional SI method for TH-based and GC-based segmentation algorithms in \cite{tanizaki2020computing}.
Then, we clarify the challenges of introducing the conditional SI for DNN-based segmentation tasks and present our idea to resolve the difficulties.


\subsection{Conditional SI for TH-based and GC-based Image Segmentation \cite{tanizaki2020computing}}

Consider an image with $n$ pixels 
corrupted by Gaussian noise as
\begin{align} \label{eq:random_input}
	\bm X = (X_1, ..., X_n)^\top = \bm \mu + \bm \veps, ~~~ \bm \veps \sim \NN(\bm 0, \Sigma),
\end{align}
where $\bm \mu \in \RR^n$ is an unknown mean pixel intensity vector, and $\bm \veps \in \RR^n$ is a vector of normally distributed noise with covariance matrix $\Sigma$, which is known or can be estimated from external data.
We note that the assumption in Equation (\ref{eq:random_input}) does not mean that the pixel intensities in an image follow a multivariate normal distribution. 
Instead, the vector of noise added to the true pixel values are assumed to follows a multivariate normal distribution.
For simplicity, we consider segmentation problems in which the  image is divided into two regions: \emph{object} and \emph{background} regions \footnote{The extension to the cases where the image is divided into more than two regions is straightforward.}.
We denote the sets of pixels in the object and background regions as $\cO_{\bm X}$ and $\cB_{\bm X}$, respectively.

\paragraph{Statistical inference.} 
%
To quantify the statistical significance of the segmentation result, Tanizaki et al. \cite{tanizaki2020computing} consider the following statistical test:
\begin{align} \label{eq:hypotheses}
	{\rm H}_{0} : \mu_{\cO_{\bm X}} = \mu_{\cB_{\bm X}}  \quad \text{vs.} \quad 
	{\rm H}_{1} : \mu_{\cO_{\bm X}} \neq \mu_{\cB_{\bm X}},
\end{align} 
where $\mu_{\cO_{\bm X}}$ and $\mu_{\cB_{\bm X}}$ are the true means of the pixel values in the object and background regions.
A reasonable choice of the test statistic is the difference in the average pixel values between the object and background regions
\begin{align} \label{eq:eta}
	\bm \eta^\top \bm X
		= \frac{1}{|\cO_{\bm X}|} \sum \limits_{i \in \cO_{\bm X}} X_i
		- \frac{1}{|\cB_{\bm X}|} \sum \limits_{i \in \cB_{\bm X}} X_i,
\end{align}
where 
%
$
\bm \eta = 
\frac{1} {|\cO_{\bm X}|} \mathbf{1}^n_{\cO_{\bm X}}  - \frac{1} {|\cB_{\bm X}|} \mathbf{1}^n_{\cB_{\bm X}}
$
%
is the vector indicating the test-statistic direction,
and $ \mathbf{1}^n_{\cC} \in \RR^n$ is a vector whose elements belonging to a set $\cC$ are set to 1, and 0 otherwise.
We would like to note that the hypotheses in \eq{eq:hypotheses} and the test-statistic direction $\bm \eta$ in \eq{eq:eta} are data-driven because they depend on $\bm X$.
Given a significance level $\alpha \in [0, 1]$ (e.g., 0.05), we reject the null hypothesis $\rm H_0$ if the $p$-value is smaller than $\alpha$, which indicates that the object region differs from the background region. 
Otherwise, we cannot say that a difference is significant.

\paragraph{Conditional SI for computing valid $p$-values.}
Let us define $\cA(\bm X)$ as the event in which the result of dividing the pixels of image $\bm X$ into the object region $\cO_{\bm X}$ and the background region $\cB_{\bm X}$ is obtained by applying a segmentation algorithm $\cA$ (i.e., TH-based or GC-based image segmentation algorithm) on $\bm X$---that is,
\begin{align} \label{eq:event}
	\cA(\bm X) = \{\cO_{\bm X}, \cB_{\bm X}\}.
\end{align}
To conduct a valid inference based on the concept of conditional SI, \cite{tanizaki2020computing} proposes to consider the sampling distribution of the test
statistic $\bm \eta^\top \bm X$ conditional on the segmentation results, that is,
\begin{align} \label{eq:conditional_inference}
	\bm \eta^\top \bm X \mid \cA(\bm X) = \cA(\bm x^{\rm obs}),
\end{align}
where $\bm x^{\rm obs}$ is an observation (realization) of the random image $\bm X$.
Then, to test the statistical significance of the segmentation result, the authors introduced the \emph{selective $p$-value}, which satisfies the following sampling property:
\begin{align} \label{eq:sampling_property_selective_p}
	\PP_{{\rm H}_0} (p_{\rm selective} \leq \alpha \mid \cA(\bm X) = \cA(\bm x^{\rm obs})) = \alpha,
\end{align}
i.e., $p_{\rm selective}$ follows a uniform distribution under the null hypothesis.
The $p_{\rm selective}$  is defined as
\begin{align} \label{eq:selective_p}
	p_{\rm selective} 
	= \mathbb{P}_{\rm H_0} \Big ( |\bm \eta^\top \bm X| \geq |\bm \eta^\top \bm x^{\rm obs}| 
	 \mid  
	\cA(\bm X) = \cA(\bm x^{\rm obs}), 
	\bm q(\bm X) = \bm q(\bm x^{\rm obs})
	\Big ),
\end{align}
where ${\bm q}(\bm X)$ is the sufficient statistic of the nuisance parameter that needs to be conditioned on in order to tractably conduct the inference and it is defined as 
%
$
	{\bm q}(\bm X) = (I_n - {\bm c} \bm \eta^\top) \bm X 
	\quad \text{with} \quad  
\bm c = \Sigma \bm \eta (\bm \eta^\top \Sigma  \bm \eta)^{-1}.
$
%
Here, we note that the selective $p$-value depends on $\bm q(\bm X)$, but the sampling property in \eq{eq:sampling_property_selective_p} continues to be satisfied without this additional condition because we can marginalize over all values of $\bm q(\bm X)$.
The $\bm q(\bm X)$ corresponds to the component $\bm z$ in the seminal conditional SI paper \cite{lee2016exact} (see Sec. 5, Eq. 5.2 and Theorem 5.2).
We note that additional conditioning on $\bm q(\bm X)$ is a standard approach in the conditional SI literature and is used in almost all the conditional SI-related studies.

\begin{example} \label{example:TH_based_segmentation}(Selection event of TH-based segmentation and the conditional inference) Given a threshold $\tau$ and an observed data $\bm x^{\rm obs}$, the selection event of TH-based segmentation is written as 
\begin{align*} 
	\cA(\bm X) = \cA(\bm x^{\rm obs}) 
	\Leftrightarrow 
	 \{\cO_{\bm X}, \cB_{\bm X}\} = \{\cO_{\bm x^{\rm obs}}, \cB_{\bm x^{\rm obs}}\} 
	 \Leftrightarrow 
	 \left \{ 
	 \bm X \in \RR^n: 
	 \begin{array}{l}
	 	X_i \geq \tau, ~ \forall i \in \cO_{\bm x^{\rm obs}}, \\ 
		X_j < \tau, ~ \forall j \in \cB_{\bm x^{\rm obs}}
	 \end{array}
	 \right \},
\end{align*}
which is simply a single polytope in the data space.
Then, the conditional inference in \eq{eq:conditional_inference} for the case of TH-based segmentation is explicitly defined as 
\begin{align*} 
	\bm \eta^\top \bm X \mid 	 
	\left \{ 
	 	X_i \geq \tau, ~ \forall i \in \cO_{\bm x^{\rm obs}},  
		X_j < \tau, ~ \forall j \in \cB_{\bm x^{\rm obs}}
	 \right \} .
\end{align*}
\end{example}
Similarly, \cite{tanizaki2020computing} showed that the selection event of GC-based segmentation can be approximated by a set of quadratic inequalities which is also represented by a polytope.
When the selection event is represented by a single polytope, we can directly apply the original conditional SI method by \cite{lee2016exact}.


\begin{figure}[!t]
\centering
\includegraphics[width=\linewidth]{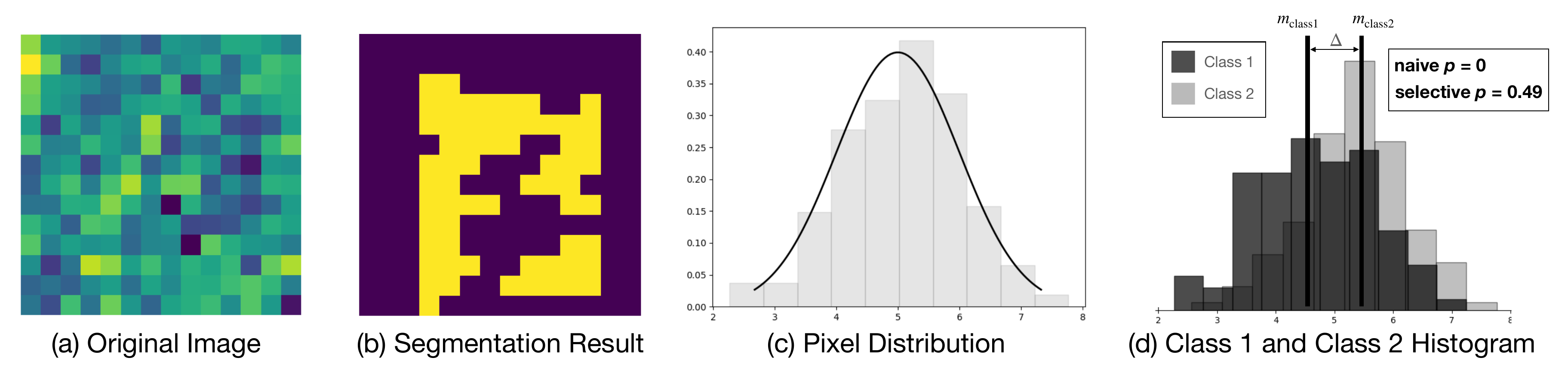}
\caption{Illustration of the bias in naive statistical inference. 
(a) We generate a null image in which all the ``true'' pixel values are the same.
(b) The segmentation result obtained by a trained CNN.
(c) The distribution and the histogram of the pixel intensities.
(d) The histograms of the pixel intensities in the background (Class 1) and object (Class 2), as well as the naive $p$-value and the proposed selective $p$-value.
Even from an image containing no object, the naive $p$-value is very small, indicating that it cannot be used to evaluate the reliability of the segmentation result.
By applying the proposed method, we can successfully identify the false segmentation result.}
\label{fig:bias_correction}
\end{figure}

\subsection{Challenges of Introducing Conditional SI for DNN-based Image Segmentation}
In this paper, we consider the same problem setup as described in \cite{tanizaki2020computing} (i.e., from \eq{eq:hypotheses} to \eq{eq:selective_p}).
The main difference is that the segmentation event $\cA(\bm X) = \{\cO_{\bm X}, \cB_{\bm X}\}$ in \eq{eq:event} is now obtained from a DNN that has already been trained in advance for the segmentation task.
The discussion thus far indicates that we can conduct an exact statistical inference for the segmentation result if we can compute the selective $p$-value in \eq{eq:selective_p}.
However, to compute $p_{\rm selective}$, the major challenge is to characterize selection event $\{ \cA(\bm X) = \cA(\bm x^{\rm obs}) \} $ of DNN-based segmentation, which is much more complicated and cannot be simply represented by an intersection of linear or quadratic inequalities as in the case of TH-based and GC-based segmentations.
In the next section, we propose a novel method that addresses all the aforementioned challenges by utilizing the concept of a parametrized line search (i.e., the homotopy method). 
Figure \ref{fig:bias_correction} illustrates the selection bias in naive statistical inference and the importance of the proposed method.
\section{Proposed Method} \label{sec:proposed_method}

In this section, we propose a method for computing the selective $p$-values in \eq{eq:selective_p}.
The main task is to identify the following set of $\bm x \in \RR^n$ that satisfies the condition part:
\begin{align} \label{eq:conditional_data_space}
	\hspace{-1mm} \cX = \{\bm x \in \RR^n \mid  \cA(\bm x) = \cA(\bm x^{\rm obs}), \bm q(\bm x) = \bm q(\bm x^{\rm obs})\}.
\end{align}
To identify $\cX$, we first reformulate the problem of identifying this data space as a problem of searching the data on a parametrized line. 
Then, we introduce a homotopy method to conduct a parametrized line search illustrated in Fig. \ref{fig:schematic_illustration}.

\begin{figure*}[!t]
\centering
\includegraphics[width=\linewidth]{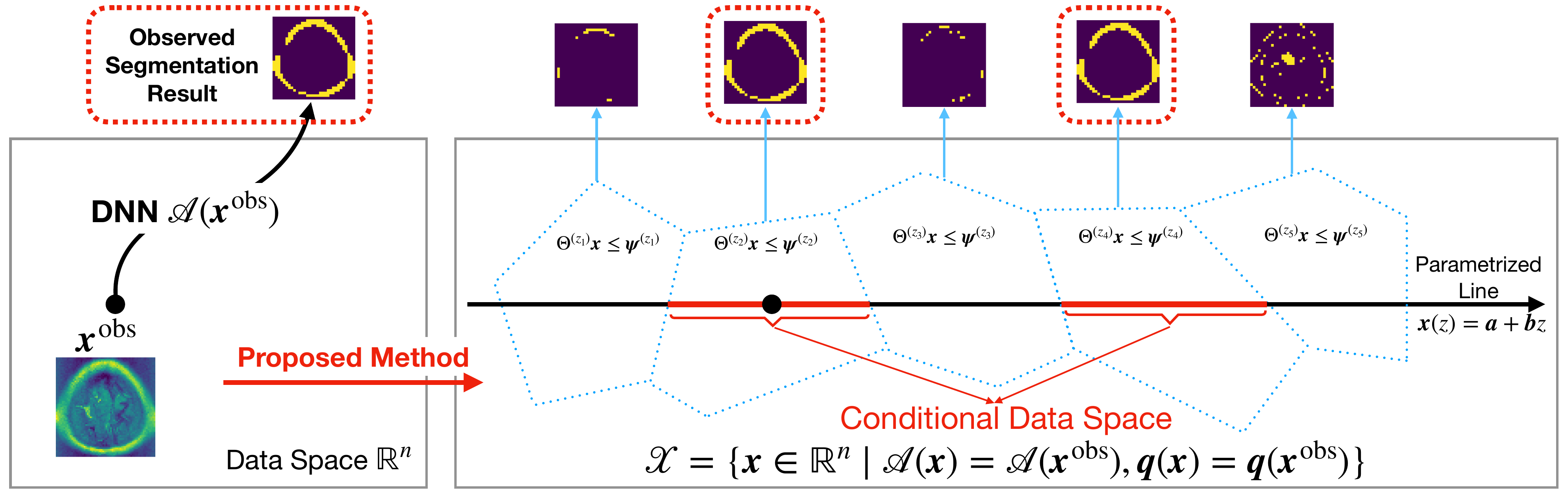} 
\caption{A schematic illustration of the proposed method.
By applying a trained CNN to the observed image $\bm x^{\rm obs}$, we obtain a segmentation result.
Then, we parametrize $\bm x^{\rm obs}$ with a scalar parameter $z$ in the dimension of the test statistic to identify the subspace $\cX$ whose data have the \emph{same} segmentation result as $\bm x^{\rm obs}$.
Finally, the valid statistical inference is conducted conditional on $\cX$.
We introduce a homotopy method for efficiently characterizing the conditional data space $\cX$.
}
\label{fig:schematic_illustration}
\end{figure*}


\subsection{Characterization of the Conditional Data Space $\cX$} \label{subsec:characterization_c_X}
In \eq{eq:conditional_data_space}, according to the second condition on the sufficient statistic of the nuisance parameter $\bm q(\bm x)$, the data in $\cX$ is restricted to a line in $\RR^n$, as stated in the following lemma.

\begin{lemma} \label{lemma:data_line}
Let us define
\begin{align} \label{eq:a_and_b}
\bm a = \bm q (\bm x^{\rm obs})
\quad \text{and} \quad 
\bm b = \Sigma {\bm \eta} ({\bm \eta}^\top  \Sigma  {\bm \eta})^{-1}.
\end{align}
Then, the conditional data space $\cX$ can be rewritten using a scalar parameter $z \in \RR$ as
\begin{align}
	\cX = \{\bm x(z) = \bm a + \bm b z \mid  z \in \cZ \}, 
	\quad \text{where} \quad 
	\cZ =  \left  \{z \in \RR \mid \cA (\bm x(z)) = \cA (\bm x^{\rm obs}) \right \}. \label{eq:c_Z}
\end{align}
\end{lemma}

\begin{proof}
The proof is deferred to Appendix \ref{appendix:proof_data_line}.
\end{proof}

We note that whereas the seminal conditional SI work \cite{lee2016exact} already implicitly considers the data on the line, this was explicitly discussed for the first time in Section 6 of \cite{liu2018more}.
Lemma \ref{lemma:data_line} indicates that we need not consider the $n$-dimensional data space.
Instead, we need only consider the \emph{one-dimensional projected} data space $\cZ$ in \eq{eq:c_Z}.
Now, let us consider a random variable $Z \in \RR$ and its observation $z^{\rm obs} \in \RR$ that satisfies $\bm X = \bm a + \bm b Z$ and $\bm x^{\rm obs} = \bm a + \bm b z^{\rm obs}$.
The selective $p$-value (\ref{eq:selective_p}) is rewritten as 
\begin{align} \label{eq:selective_p_parametrized}
	p_{\rm selective} 
	 &= \mathbb{P}_{\rm H_0} \left ( |\bm \eta^\top \bm X| \geq |\bm \eta^\top \bm x^{\rm obs}|
	\mid 
	\bm X \in \cX
	\right) \nonumber \\ 
	 &= \mathbb{P}_{\rm H_0} \left ( |Z| \geq |z^{\rm obs}| 
	\mid 
	Z \in \cZ
	\right).
\end{align}
Because the variable $Z \sim \NN(0, \bm \eta^\top \Sigma \bm \eta)$ under the null hypothesis, $Z \mid Z \in \cZ$ follows a \emph{truncated} normal distribution. 
Once the truncation region $\cZ$ is identified, computation of the selective $p$-value in (\ref{eq:selective_p_parametrized}) is straightforward.
Therefore, the remaining task is to identify $\cZ$.


\subsection{Identification of Truncation Region $\cZ$} \label{subsec:characterization_c_Z}

As discussed in \S \ref{subsec:characterization_c_X}, to calculate the selective $p$-value \eq{eq:selective_p_parametrized}, we must identify the truncation region $\cZ$ in (\ref{eq:c_Z}).
To construct $\cZ$, we must (a) compute $\cA(\bm x(z))$ for all $z \in \RR$, and (b) identify the set of intervals of $z$ on which $\cA(\bm x(z)) = \cA(\bm x^{\rm obs})$.
However, it seems intractable to obtain $\cA(\bm x(z))$ for infinitely many values of $z \in \RR$.

Our first idea of introducing conditional SI for DNN is that we additionally condition on some extra events to make the problem tractable.
We now focus on a class of DNNs whose activation functions (AFs) are piecewise-linear---for example, ReLU, Leaky ReLU.
%
Then, we consider additional conditioning on the selected piece of each piecewise-linear AF. 

\begin{definition}
Let $s_j(\bm x)$ be ``the selected piece'' of a piecewise-linear AF at the $j$-th unit in a DNN for a given input image $\bm x$, and let $\bm s(\bm x)$ be the set of $s_j(\bm x)$ for all the nodes in a DNN. 
\end{definition}
For example, given a ReLU AF, $s_j(\bm x)$ takes either 0 or 1, depending on whether the input to the $j$-th unit is located at the flat part (inactive) or the linear part (active) of the ReLU function.
Using the notion of selected pieces $\bm s(x)$, instead of computing the selective $p$-value in (\ref{eq:selective_p_parametrized}), we consider the following \emph{over-conditioning (oc)} conditional $p$-value:
\begin{align} \label{eq:selective_p_oc}
	p^{\rm oc}_{\rm selective} 
	 = \mathbb{P}_{\rm H_0} \left ( |Z| \geq |z^{\rm obs}| 
	\mid 
	Z \in \cZ^{\rm oc}
	\right), 
\end{align}
where 
\[
	\cZ^{\rm oc} =  \left  \{z \in \RR \mid \cA (\bm x(z)) = \cA (\bm x^{\rm obs}), \bm s(\bm x(z)) = \bm s(\bm x^{\rm obs}) \right \}.
\]
However, such an over-conditioning in SI leads to a loss of statistical power \cite{lee2016exact, fithian2014optimal}.

Our second idea is to develop a \emph{homotopy method} to resolve the over-conditioning problem---that is, remove the conditioning of $\bm s(\bm x(z)) = \bm s(\bm x^{\rm obs})$.
Using the homotopy method, we can efficiently compute $\cA(\bm x(z))$ in a finite number of operations without the need to consider infinitely many values of $z \in \RR$, which is subsequently used to obtain the truncation region $\cZ$ in (\ref{eq:c_Z}).
The main idea is to compute a finite number of \emph{breakpoints} at which one node of the network changes its status from active to inactive or vice versa.
This concept is similar to the regularization path of Lasso, where we can compute a finite number of breakpoints at which the active set changes.

To this end, we introduce a two-step iterative approach, generally described as follows (see Fig. \ref{fig:schematic_illustration}):

$\bullet$ \textbf{Step 1 (over-conditioning step).} Considering the over-conditioning case by additionally conditioning on the selected pieces of all hidden nodes in the DNN.

$\bullet$ \textbf{Step 2 (homotopy step).} Combining multiple over-conditioning cases using the homotopy method to obtain $\cA(\bm x(z))$ for all $z \in \RR$.


\begin{figure}[!t]
\begin{subfigure}{.525\linewidth}
  \centering
  \includegraphics[width=\linewidth]{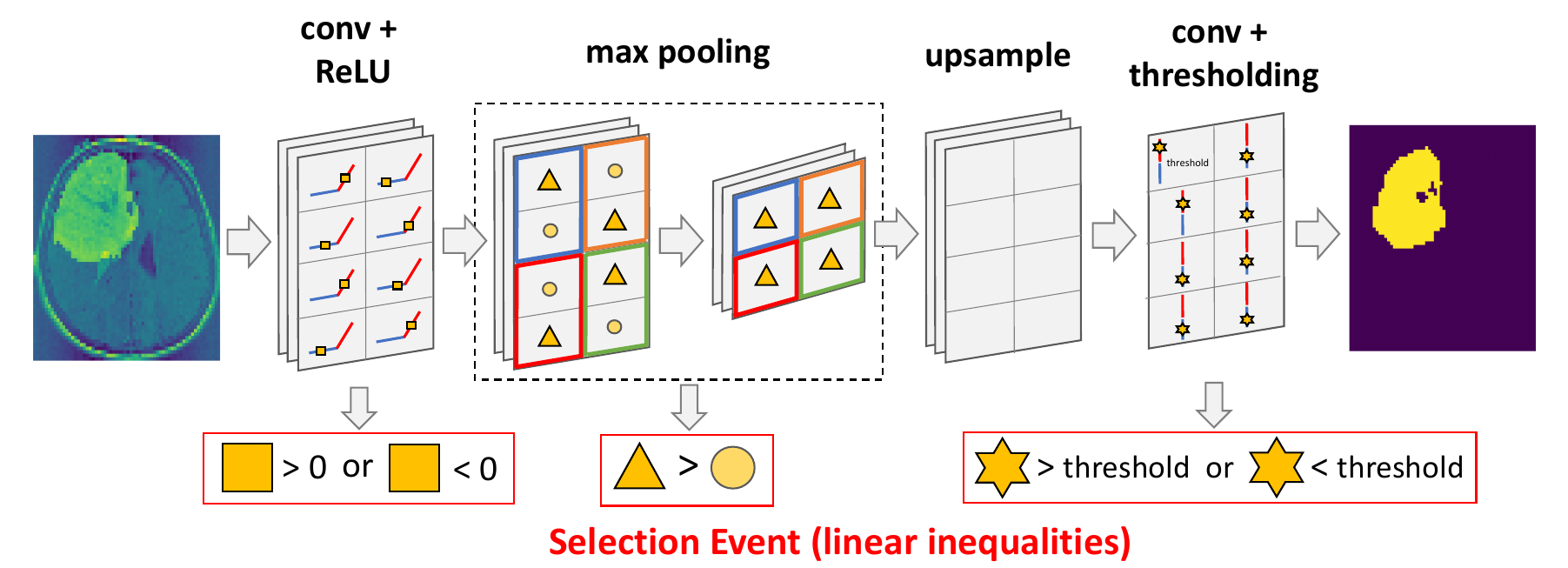}  
  \caption{
  The over-conditioning selection event of a basic CNN.
}
\label{fig:dnn_event}
\end{subfigure}
\begin{subfigure}{.465\linewidth}
  \centering
  \includegraphics[width=\linewidth]{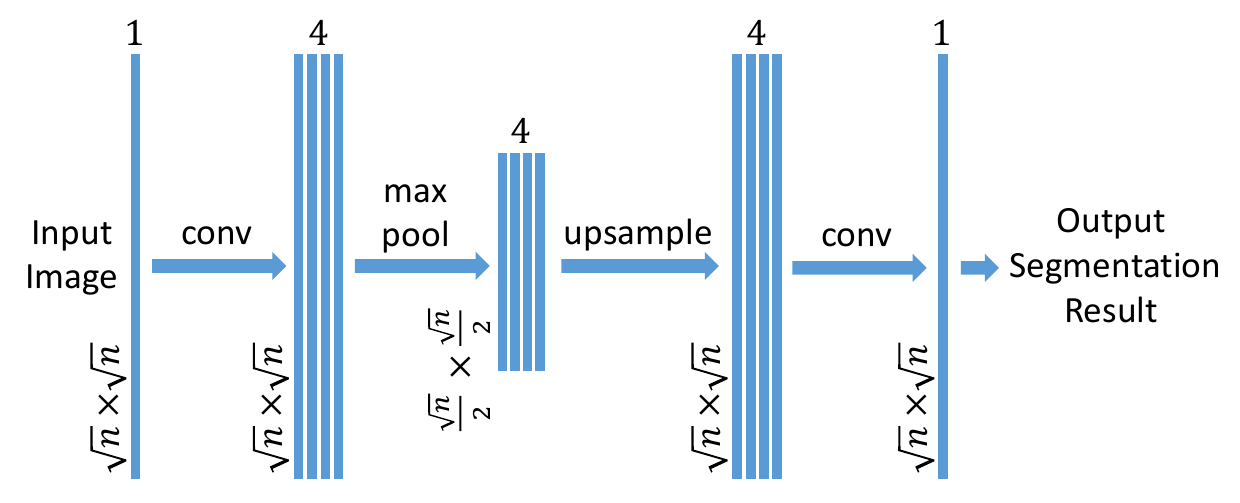}  
  \caption{
  Basic network structure for segmentation.
  } 
  \label{fig:basic_dnn}
\end{subfigure}
\caption{
(a) The over-conditioning selection event of a basic CNN can be written as a set of linear inequalities because the basic operations (e.g., convolution, max pooling) and 
 activation functions (e.g., ReLU, thresholding) of a CNN can be represented as linear inequalities.
 Note that non-piecewise linear functions such as sigmoid or hyperbolic tangent function are equivalent to consider the values prior to activation with threshold value 0.
(b) Basic network structure for segmentation.}
\end{figure}

\subsection{Step 1: Over-Conditioning Step}\label{subsec:step_1}


We show that by additionally conditioning on the selected pieces $\bm s(\bm x^{\rm obs})$ of all the hidden nodes, the selection event of DNN is written as a set of linear inequalities.
The illustration is shown in Fig. \ref{fig:dnn_event}. 
%
%
\begin{lemma} \label{lemma_oc}
Consider a class of DNN that consists of affine operations and piecewise-linear AFs. Then, the overconditioning region is written as
\begin{align*}
 \cZ^{\rm oc} = \{z \in \RR \mid \Theta^{(\bm s(\bm x^{\rm obs}))} \bm x(z) \le \bm \psi^{(\bm s(\bm x^{\rm obs}))} \}
\end{align*}
using a matrix 
$\Theta^{(\bm s(\bm x^{\rm obs}))}$
and 
a vector
$\bm \psi^{(\bm s(\bm x^{\rm obs}))}$
that depend only on the selected pieces
$\bm s(\bm x^{\rm obs})$.
\end{lemma}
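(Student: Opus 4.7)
The plan is to exploit the fact that, once the piece selected by every piecewise-linear activation function in the network is fixed, every intermediate quantity in the DNN becomes an affine function of the input $\bm x$, and the requirement that each unit actually selects its prescribed piece translates into a linear inequality in $\bm x$. Collecting all such inequalities (from every hidden unit and from the final segmentation head) and then restricting to the line $\bm x(z) = \bm a + \bm b z$ will give the claimed representation.

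I would proceed by induction on the layer index $\ell$. Write the pre-activation at the $j$-th unit of layer $\ell$ as $u_j^{(\ell)}(\bm x) = \bm w_j^\top \bm h^{(\ell-1)}(\bm x) + b_j$, starting from $\bm h^{(0)}(\bm x) = \bm x$. Assuming by the inductive hypothesis that $\bm h^{(\ell-1)}(\bm x)$ is affine in $\bm x$ under the fixed selection $\bm s(\bm x^{\rm obs})$ up through layer $\ell-1$, the pre-activation $u_j^{(\ell)}(\bm x) = \bm \alpha_j^\top \bm x + \beta_j$ is affine as well. For a piecewise-linear AF (ReLU, leaky ReLU, and more generally any AF that is linear on each of finitely many polyhedral pieces), the condition that unit $j$ selects the particular piece indicated by $s_j(\bm x^{\rm obs})$ is exactly a linear inequality (or intersection of two) on $u_j^{(\ell)}$, hence on $\bm x$; and on the selected piece, the post-activation is itself a fixed linear function of $u_j^{(\ell)}$, so $h_j^{(\ell)}(\bm x)$ remains affine in $\bm x$, closing the induction.

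It remains to encode the segmentation output $\cA(\bm x) = \{\cO_{\bm x}, \cB_{\bm x}\}$ under the fixed activation pattern. Since the last layer is affine in its input and we have already shown that its input is affine in $\bm x$, the per-pixel class scores are affine in $\bm x$, and the event that the argmax (or thresholding) assigns each pixel to the same class as in $\bm x^{\rm obs}$ is again a finite intersection of linear inequalities in $\bm x$. Stacking the unit-level inequalities from every hidden layer together with these output-level inequalities into a single system yields $\Theta^{(\bm s(\bm x^{\rm obs}))} \bm x \le \bm \psi^{(\bm s(\bm x^{\rm obs}))}$, where by construction $\Theta^{(\bm s(\bm x^{\rm obs}))}$ and $\bm \psi^{(\bm s(\bm x^{\rm obs}))}$ depend only on the selected pieces. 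Evaluating at $\bm x = \bm x(z)$ gives the stated form of $\cZ^{\rm oc}$.

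The main obstacle is purely bookkeeping: tracking how the affine coefficients $(\bm \alpha_j, \beta_j)$ propagate through layers whose earlier pieces have already been substituted in, and uniformly writing each of the two sides of a piecewise-linear AF as a single $\le$ constraint (so that strict vs.\ non-strict boundaries do not cause issues on the measure-zero set where $u_j^{(\ell)} = 0$). Once the DNN is unfolded along the fixed selection $\bm s(\bm x^{\rm obs})$, there is no nonlinearity left and the result reduces to elementary linear algebra.
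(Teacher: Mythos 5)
Your proposal is correct and follows essentially the same route as the paper's proof: fixing the selected pieces makes every pre-activation affine in $\bm x$, so each piece-selection condition becomes a linear inequality, and stacking them (then restricting to $\bm x(z) = \bm a + \bm b z$) gives the stated polyhedral form. Your write-up is in fact more complete than the paper's one-paragraph argument, since you make the layer-wise induction explicit and also handle the output-level inequalities encoding $\cA(\bm x(z)) = \cA(\bm x^{\rm obs})$, which the paper defers to its worked example rather than the proof itself.
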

\begin{proof}
%
For the class of DNN that consists of affine operations and piecewise-linear AFs, by fixing the selected pieces of all the piecewise-linear AFs, the input to each AF is represented by an affine function of an image $\bm x$. 
Therefore, the condition for selecting a piece in a piecewise-linear AF, $s_j(\bm x(z)) = s_j(\bm x^{\rm obs})$, is written as a linear inequality with respect to $\bm x(z)$.
\end{proof}


\begin{remark} \label{remark_1}
{\normalfont
(Activation functions in the hidden layers) 
In this work, we focus on a trained DNN where the AFs used at hidden layers are \emph{piecewise linear}---for example, ReLU, Leaky ReLU, which is commonly used in a CNN.
Otherwise, if there is any specific demand to use non-piecewise linear functions (such as sigmoid or hyperbolic tangent), we can apply a piecewise-linear approximation approach to these functions. 
We provide examples of such approximation in Appendix \ref{appendix:linear_approximation}.
}
\end{remark}

\begin{remark} \label{remark_2}
{\normalfont
(Operations in a trained neural network) 
Most basic operations in a trained neural network are written as affine operations.
In a traditional neural network, the multiplication results between the weight matrix and the output of the previous layer and its summation with the bias vector is an affine operation.
In a CNN, the convolution and upsampling operations are obviously affine operations.
Even when an operation in a DNN is NOT represented as a \emph{single} affine operation, it is often the case that the operation is represented by as a \emph{set} of affine operations, as in the following remark.
}
\end{remark}

\begin{remark} \label{remark_3}
{\normalfont
(Max-pooling)
Although the max-pooling is not an affine operation, it can be written as a set of linear inequalities.
For instance, $v_1 = \max \{v_1, v_2, v_3\} $ can be written as a set $\{ \bm e_1^\top \bm v \leq \bm e_2^\top \bm v, \bm e_1^\top \bm v \leq \bm e_3^\top \bm v\}$, where $\bm v = (v_1, v_2, v_3)^\top$ and $\bm e_i$ is the standard basis vector with $1$ at position $i$.
}
\end{remark}

\begin{remark} \label{remark_4}
{\normalfont
(Activation functions at the output layer)
In Remark \ref{remark_1}, we mention that we need to perform piecewise linear approximation for non-piecewise linear activations.
However, if these functions are used at the output layer (e.g., sigmoid function or tanh function), we need \emph{ not} perform the approximation task, because we can define the set of linear inequalities based on the values prior to activation. See the next Example \ref{example_1} for the case of a sigmoid function.
}
\end{remark}

The list of components that are commonly used in a DNN for segmentation task and can be represented as a set of linear inequalities is provided in Appendix \ref{app:list_components}.
We would like to note that, since we are primarily interested in the reliability of a trained network given new inputs (i.e., not training inputs), the validity of our proposed method does not depend on DNN training.
Therefore, all of the operations that only operate in the training process such as Dropout do not affect the applicability of the proposed SI approach and we do not need consider the selection event of those operations in our method.


\begin{example} \label{example_1} 
{\normalfont
(The selection event of a basic CNN is a set of linear inequalities) 
Consider a basic network structure for segmentation as shown in Fig. \ref{fig:basic_dnn} and $n$ is an even number.
Let us start with the first convolutional layer with 4 filters.
Let $R^{f} \in \RR^{\sqrt{n} \times \sqrt{n}}$ be the matrix obtained by the $f^{\rm th}$ filter.
We apply ReLU activation on $R^{f}$, $\forall f \in [4]$.
The selection event of ReLU is written as 
\begin{align*}
	\cS_{\rm ReLU} = 
	\bigcup_{f = 1}^4 
	\bigcup_{i = 1}^{\sqrt{n}} 
	\bigcup_{j = 1}^{\sqrt{n}} 
	\left \{ 
	\begin{array}{l}
		R^{f}_{ij} \geq 0, \text{ if output of ReLU $\geq 0$}, \\ 
		R^{f}_{ij} < 0, ~ \text{otherwise}.
	\end{array}
	\right \}
\end{align*}
Let $P^{f} \in \RR^{\sqrt{n} \times \sqrt{n}}$ is the matrix after applying ReLU on $R^{f}$.
We now move to the pooling layer by applying the max-pooling on $P^{f}$, $\forall f \in [4]$, with $2 \times 2$ window.
Selection event of max-pooling is 
\begin{align*}
	\cS_{\rm MaxPool} = \bigcup_{f = 1}^4 
	\bigcup_{i \in \cI} 
	\bigcup_{j \in \cI} 
	\left \{ 
			P^{f}_{ij} \leq q, ~ P^{f}_{i (j + 1)} \leq q, 
			P^{f}_{(i + 1) j} \leq q, ~ P^{f}_{(i + 1) (j + 1)} \leq q
	\right \},
\end{align*}
where $q = \max \{P^{f}_{ij}, P^{f}_{i (j + 1)}, P^{f}_{(i + 1) j}, P^{f}_{(i + 1) (j + 1)}\} $
and $\cI = \left \{ 2\kappa + 1: \kappa \in \{0, 1, ..., (n / 2 - 1)\} \right \} $.
We continue the forward process until the last convolutional layer.
Let $W \in \RR^{\sqrt{n} \times \sqrt{n}}$ be the matrix at this final layer.
We apply sigmoid function on $W$.
The selection event of sigmoid  is written as 
\begin{align*}
	\cS_{\rm Sigmoid} = 
	\bigcup_{i = 1}^{\sqrt{n}} 
	\bigcup_{j = 1}^{\sqrt{n}} 
	\left \{ 
	\begin{array}{l}
		W_{ij} \geq 0, \text{ if sigmoid output $\geq 0.5$}, \\ 
		W_{ij} < 0, ~ \text{otherwise}.
	\end{array}
	\right \}
\end{align*}
Finally, the over-conditioning selection event is 
\[
	\cS = \cS_{\rm ReLU} ~ \bigcup ~ \cS_{\rm MaxPool} ~ \bigcup ~  \cS_{\rm Sigmoid},
\]
which is a set of linear inequalities. 
}
\end{example}

\begin{figure}[t]
\begin{minipage}[t]{.545\textwidth}
\begin{algorithm}[H]
\renewcommand{\algorithmicrequire}{\textbf{Input:}}
\renewcommand{\algorithmicensure}{\textbf{Output:}}
\begin{scriptsize}
 \begin{algorithmic}[1]
  \REQUIRE ${\bm x}^{\rm obs}, z_{\rm min}, z_{\rm max}$
	\vspace{2pt}
	\STATE Obtain $\cA(\bm x^{\rm obs})$ by applying the trained DNN to $\bm x^{\rm obs}$
	\vspace{2pt}
	\STATE Compute $\bm \eta$  $\leftarrow$ Eq. \eq{eq:eta} 
	\vspace{2pt}
	\STATE Calculate $\bm a$ and $\bm b$  $\leftarrow$ Eq. (\ref{eq:a_and_b})
	\vspace{2pt}
	\STATE $\cA(\bm x(z)) \leftarrow {\tt compute\_solution\_path} (\bm a, \bm b, z_{\rm min}, z_{\rm max})$   // Algorithm \ref{alg:solution_path_app}
	\vspace{2pt}
	\STATE Identify $\cZ \leftarrow \{z: \cA (\bm x(z)) = \cA(\bm x^{\rm obs})\}$
	\vspace{2pt}
	\STATE $p_{\rm selective} \leftarrow$ Eq. (\ref{eq:selective_p_parametrized})
	\vspace{2pt}
  \ENSURE $p_{\rm selective}$ 
 \end{algorithmic}
\end{scriptsize}
\caption{{\tt parametric\_SI\_DNN}}
\label{alg:parametric_SI}
\end{algorithm}
\end{minipage}
\hspace{1mm}
\begin{minipage}[t]{.455\textwidth}
\begin{algorithm}[H]
\renewcommand{\algorithmicrequire}{\textbf{Input:}}
\renewcommand{\algorithmicensure}{\textbf{Output:}}
\begin{scriptsize}
 \begin{algorithmic}[1]
  \REQUIRE $\bm a, \bm b, z_{\rm min}, z_{\rm max}$
	\vspace{1pt}
	\STATE Initialization: $t = 1$, $z_t=z_{\rm min}$, $\cT = \{z_t\}$
	\vspace{2pt}
	\WHILE {$z_t < z_{\rm max}$}
		\vspace{2pt}
		\STATE Compute $\bm x(z_t) = \bm a + \bm b z_t$ in $\RR^n$ of $z_t$ 
		\vspace{2pt}
		\STATE Obtain $\cA(\bm x(z_t))$ 
		by applying a trained DNN to $\bm x(z_t)$
		\vspace{3pt}
		\STATE Compute $z_{t+1}$  $\leftarrow$ Lemma \ref{lemma_homotopy}%
		\vspace{3pt}
		\STATE $\cT = \cT \cup \{z_{t+1}\}$, and $t = t + 1$
	\ENDWHILE
	\vspace{2pt}
  \ENSURE $\{\cA(\bm x({z_t})\}_{z_t \in \cT}$
 \end{algorithmic}
\end{scriptsize}
\caption{{\tt compute\_solution\_path}}
\label{alg:solution_path_app}
\end{algorithm}
\end{minipage}
\end{figure}

\subsection{Step 2: Homotopy Step}\label{subsec:step_2}

We introduce homotopy method to compute $\cA(\bm x(z))$ by combining multiple over-conditioning steps.
\begin{lemma} \label{lemma_homotopy}
Consider a real value $z_t$. 
By applying a trained DNN to $\bm x(z_t)$, we obtain a set of linear inequalities $\Theta^{(\bm s(\bm x (z_t)))} \bm x(z_t) \leq \bm \psi^{(\bm s(\bm x (z_t)))}$.
Then, the next breakpoint $z_{t+1} > z_t$, at which the status of one node is changed from active to inactive or vice versa---that is, the sign of one linear inequality changes. This breakpoint is calculated by
\begin{align*} 
	z_{t+1} = 
	\min \limits_{k:(\Theta^{(\bm s(\bm x (z_t)))} \bm b)_k > 0} \frac{\psi^{(\bm s(\bm x (z_t)))}_k - (\Theta^{(\bm s(\bm x (z_t)))} \bm a)_k}{(\Theta^{(\bm s(\bm x (z_t)))} \bm b)_k}.
\end{align*}
%
\end{lemma}

The proof of Lemma \ref{lemma_homotopy} is provided in Appendix \ref{appendix:proof_lemma_2}.
Algorithm \ref{alg:solution_path_app} shows our solution to efficiently identify $\cA(\bm x(z))$.
In this algorithm, multiple \emph{breakpoints} $z_1< z_2 < ... < z_{|\cT|}$ are computed one by one. 
Each breakpoint $z_t, t \in [|\cT|]$ indicates the point at which the sign of one linear inequality is changed, i.e., the status of one node in the network will change from active to inactive or vice versa.
By identifying all these breakpoints $\{z_t\}_{t \in [|\cT|]}$, the solution path is given by
$\cA(\bm x(z)) = \cA(\bm x(z_t))$ if $z \in [z_t, z_{t+1}], t \in [|\cT|]$. 
Algorithm \ref{alg:parametric_SI} shows the entire procedure to calculate the selective $p$-value in (\ref{eq:selective_p_parametrized}).
First, we apply a DNN on $\bm x^{\rm obs}$ to obtain the segmentation result $\cA(\bm x^{\rm obs})$.
Next, we calculate $\bm \eta $ which is used to identify a parameterized line in $\RR^n$.
Then, we compute $\cA(\bm x(z))$ by using Algorithm \ref{alg:solution_path_app}.
After obtaining $\cA(\bm x(z))$, we identify $\cZ$ in (\ref{eq:c_Z}), which is the key factor for computing the selective $p$-value. 
Regarding the choice of $[z_{\min}, z_{\max}]$, under a normal distribution, neither very positive nor very negative $z$ values affect the inference. 
Therefore, it is reasonable to consider a range of values, e.g., $[ - 20 \sigma, 20 \sigma]$ \cite{liu2018more}, 
where $\sigma$ is the standard deviation of the distribution of the test statistic.


\paragraph{Complexity.} 
%
In the literature of homotopy method (a.k.a. parametric programming), it is known that the actual computational cost differs significantly from the worst case. 
A well-known application of the homotopy method in the ML community is the Lasso regularization path, which also has the worst-case computational cost on the exponential order of the number of features, but the actual cost is known to be nearly linear order. 
Similarly, in the proposed homotopy method for SI, it is also evident from our experimental results that the number of breakpoints is almost linearly increasing in practice (Fig. \ref{fig:fig_interval}).
We additionally note that, at step 4 of Algorithm \ref{alg:solution_path_app}, we apply a trained DNN to the parametrized data $\bm x(z_t)$ at any breakpoints $z_t$. Therefore, the number of forward passes of the network is equal to the number of breakpoints.

\paragraph{Class of networks.}
The proposed method can be applied to a class of \emph{piecewise linear networks} whose network operations are characterized by a set of linear inequalities or approximated by piecewise-linear functions (many state-of-the-art image segmentation networks are or can be well-approximated by piecewise linear networks).
This is due to the reason that all the theories and algorithms in \S \ref{sec:proposed_method} only depend on the property of each component and does not depend on the entire structure of the network.
We believe that our method is fairly general because most of the basic operations in a trained neural network can be decomposed or approximated by affine operations as we presented in Remarks \ref{remark_1}-\ref{remark_4}, Example \ref{example_1} as well as Appendix \ref{appendix:linear_approximation}.

\section{Experiment} \label{sec:experiment}

\begin{figure}[!t]

\begin{minipage}[t]{.495\textwidth}
\begin{subfigure}{.495\linewidth}
  \centering
  \includegraphics[width=\linewidth]{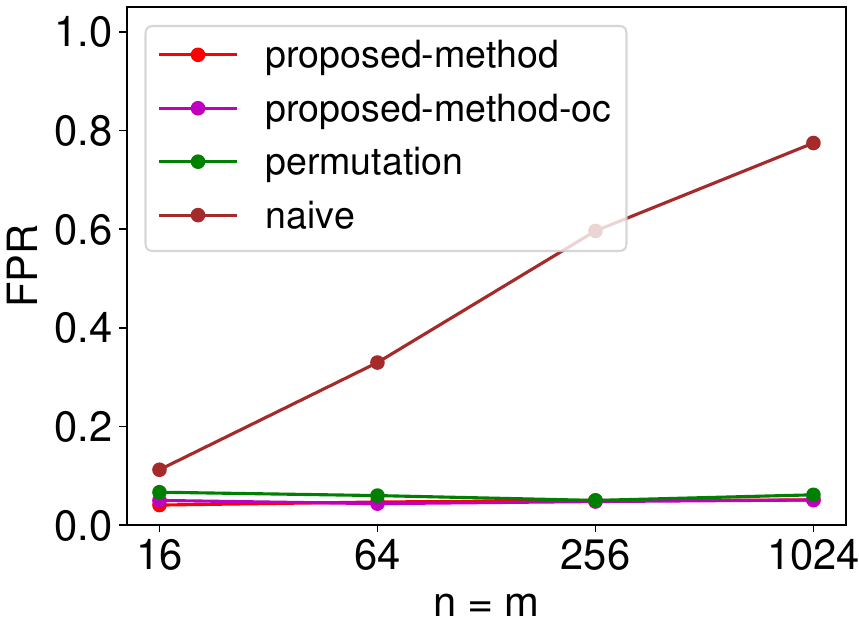}  
  \caption{Independence}
\end{subfigure}
\begin{subfigure}{.495\linewidth}
  \centering
  \includegraphics[width=\linewidth]{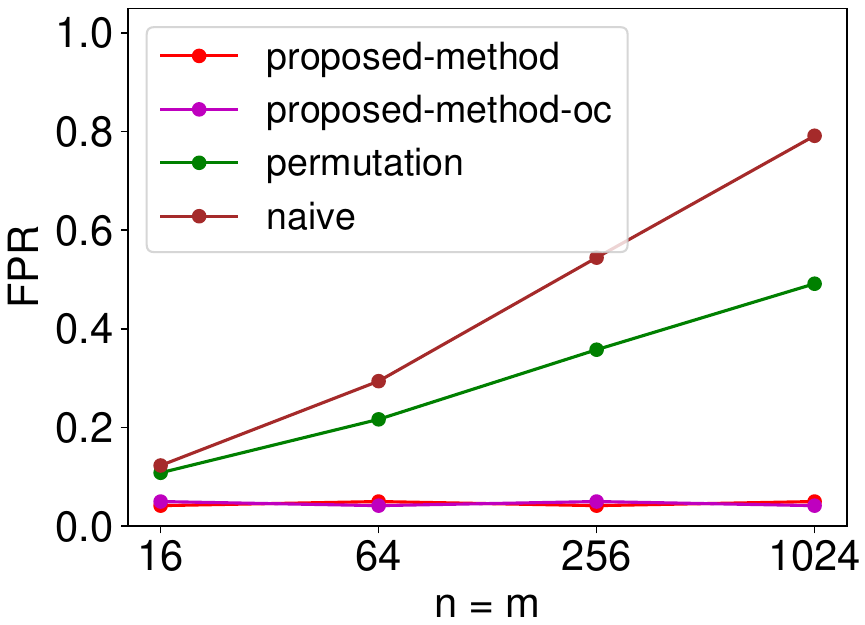}  
  \caption{Correlation}
\end{subfigure}
\caption{False Positive Rate (FPR) comparison.}
\label{fig:fig_fpr}
\end{minipage}
\begin{minipage}[t]{.495\textwidth}
\begin{subfigure}{.495\linewidth}
  \centering
  \includegraphics[width=\linewidth]{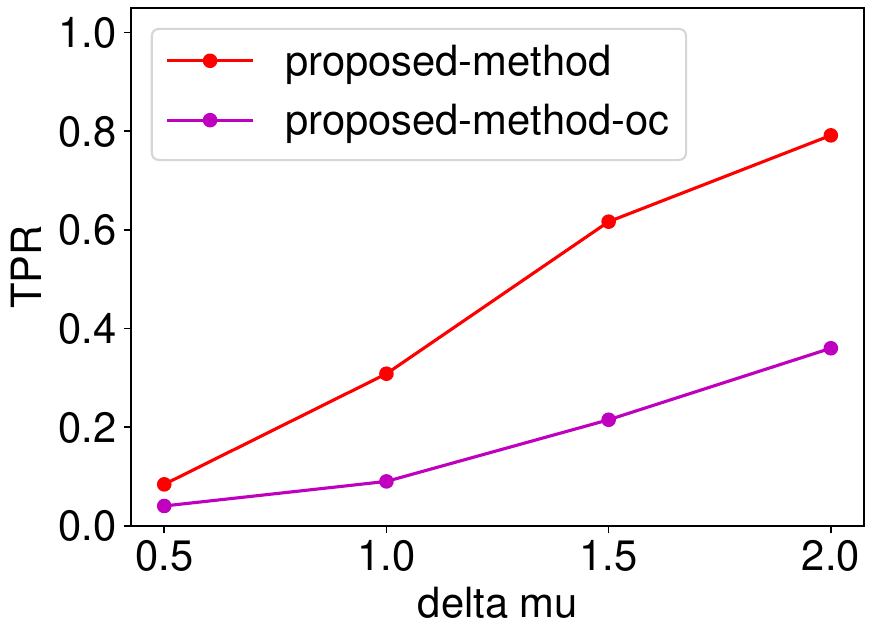}  
  \caption{Independence}
\end{subfigure}
\begin{subfigure}{.495\linewidth}
  \centering
  \includegraphics[width=\linewidth]{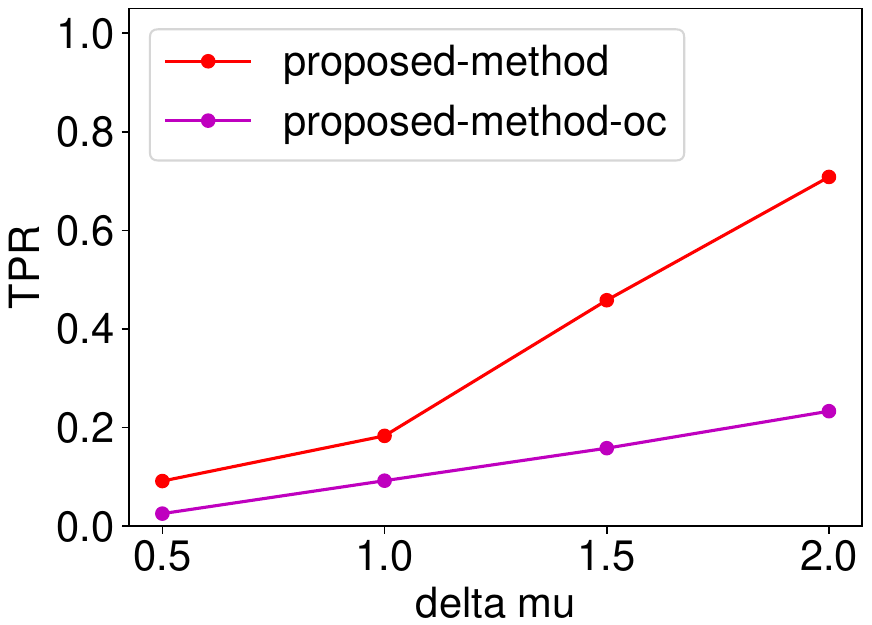}  
  \caption{Correlation}
\end{subfigure}
\caption{True Positive Rate (TPR) comparison.}
\label{fig:fig_tpr}
\end{minipage}
\end{figure}

%
%

%
%

\textbf{Experimental setup.} We compared the proposed method (homotopy method) with proposed-method-oc (over-conditioning), naive method and permutation test.
The details of the methods for comparison are shown in Appendix \ref{appendix:detailed_setup}.
We considered two covariance matrices: 

\begin{itemize}
	\item $\Sigma = I_n$ (independence) and 
	\item $\Sigma = [0.5^{|i - j|}]_{ij} \in \RR^{n \times n}$ (correlation).
\end{itemize}
Regarding the FPR experiments, we generated 120 null images 
$\bm X = (X_1, ..., X_n)  \sim \NN(\bm 0_n, \Sigma)$ for $n \in  \{16, 64, 256, 1024\}$.
For each value of $n$, we run 120 trials.
To test the power, we generated images $\bm X$ with $n = 256$, in which the \emph{true} average difference in the underlying model $\mu_{\cO_{\bm x}} - \mu_{\cB_{\bm x}} = \Delta_{\mu} \in \{0.5, 1.0, 1.5, 2.0\}$.
For each $\Delta_{\mu}$, we run 120 trials.
We selected the significance level $\alpha = 0.05$ and we used the basic CNN shown in Fig. \ref{fig:basic_dnn}.


In the literature, many network architectures have been proposed for segmentation problems. 
Although these architectures are very complex, if they are broken down into smaller elements, each can be represented (or at least well approximated) as an affine operation whose selection event can be handled by the proposed method. 
Because the main contribution of this study is to introduce conditional SI by which to evaluate the reliability of the segmentation results, empirical implementations and evaluations of the complex architectures are beyond the scope of this study.


\begin{figure}[!t]

\begin{minipage}[t]{.495\textwidth}
\begin{figure}[H]
\centering
  \includegraphics[width=.82\linewidth]{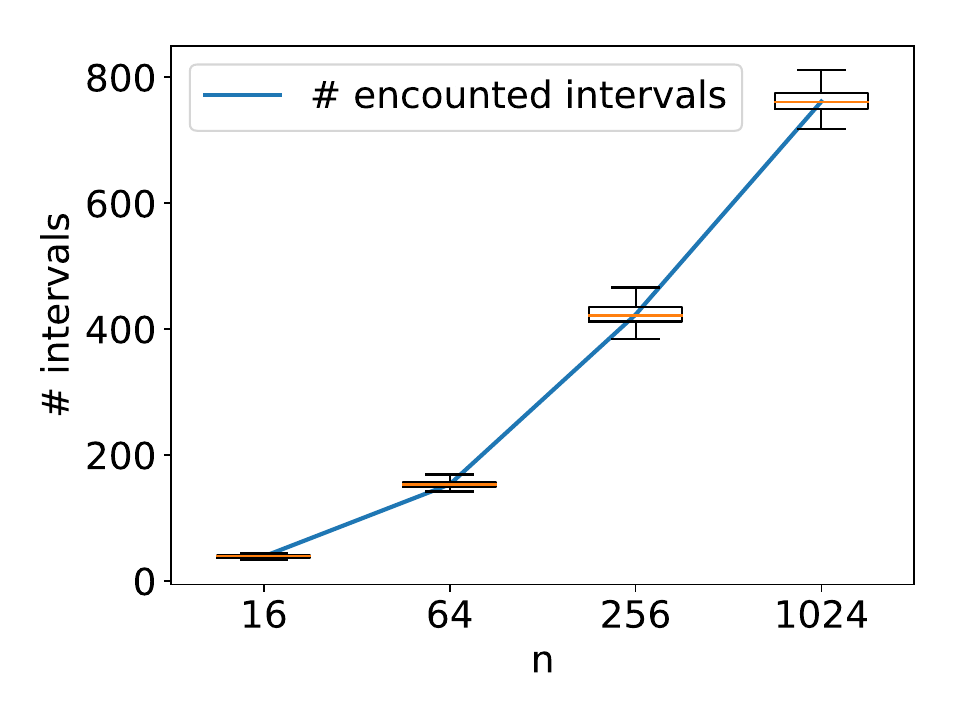}  
\caption{The number of encountered intervals on the line along the direction of the test statistic is almost linearly increasing in practice.}
\label{fig:fig_interval}
\end{figure}
\end{minipage}
\begin{minipage}[t]{.495\textwidth}
\begin{figure}[H]
\centering
\begin{subfigure}{0.49\linewidth}
\centering
\includegraphics[width=\linewidth]{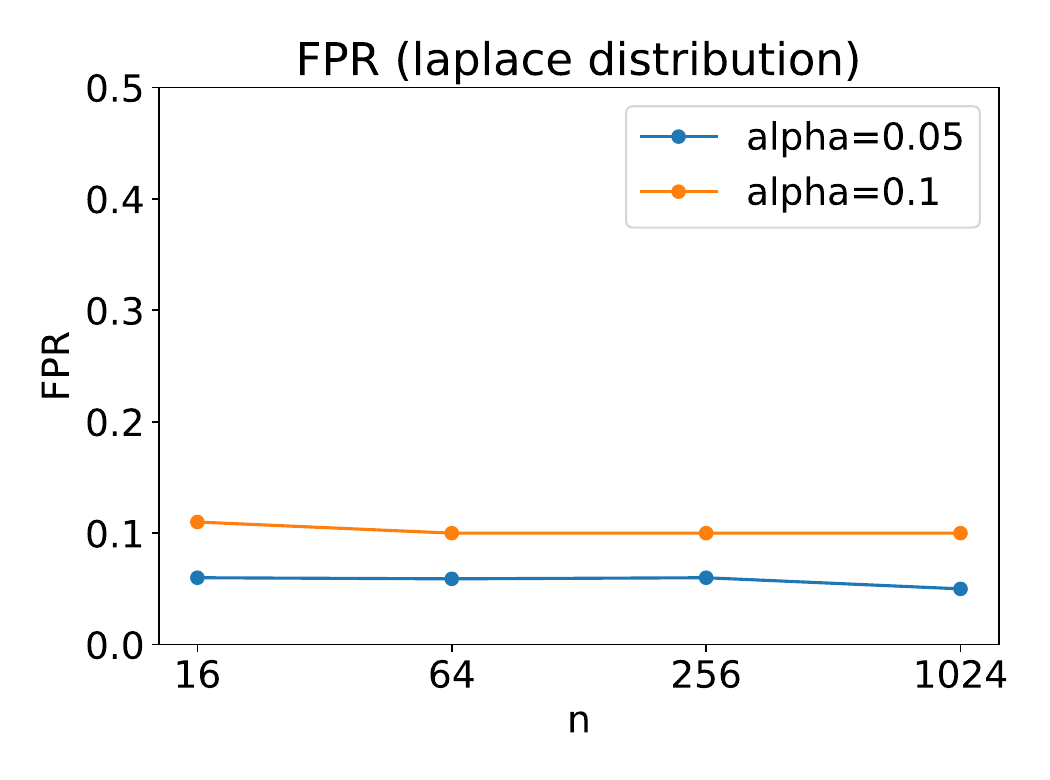}
\end{subfigure}
\begin{subfigure}{0.49\linewidth}
\centering
\includegraphics[width=\linewidth]{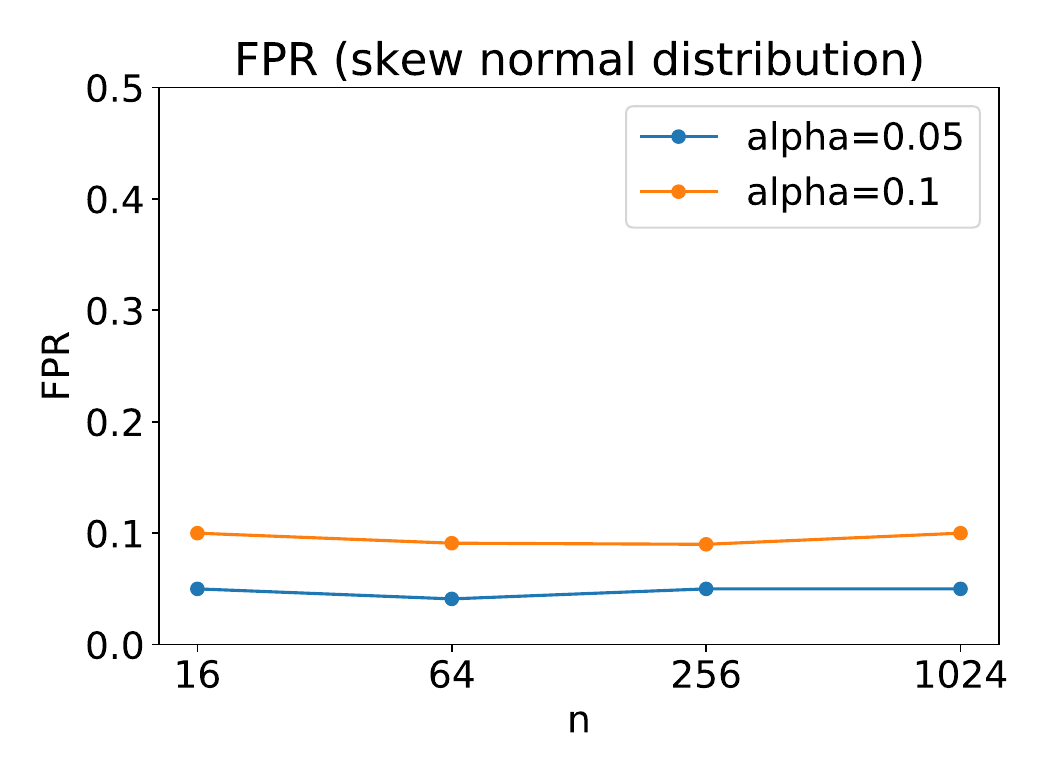}
\end{subfigure}
\begin{subfigure}{0.49\linewidth}
\centering
\includegraphics[width=\linewidth]{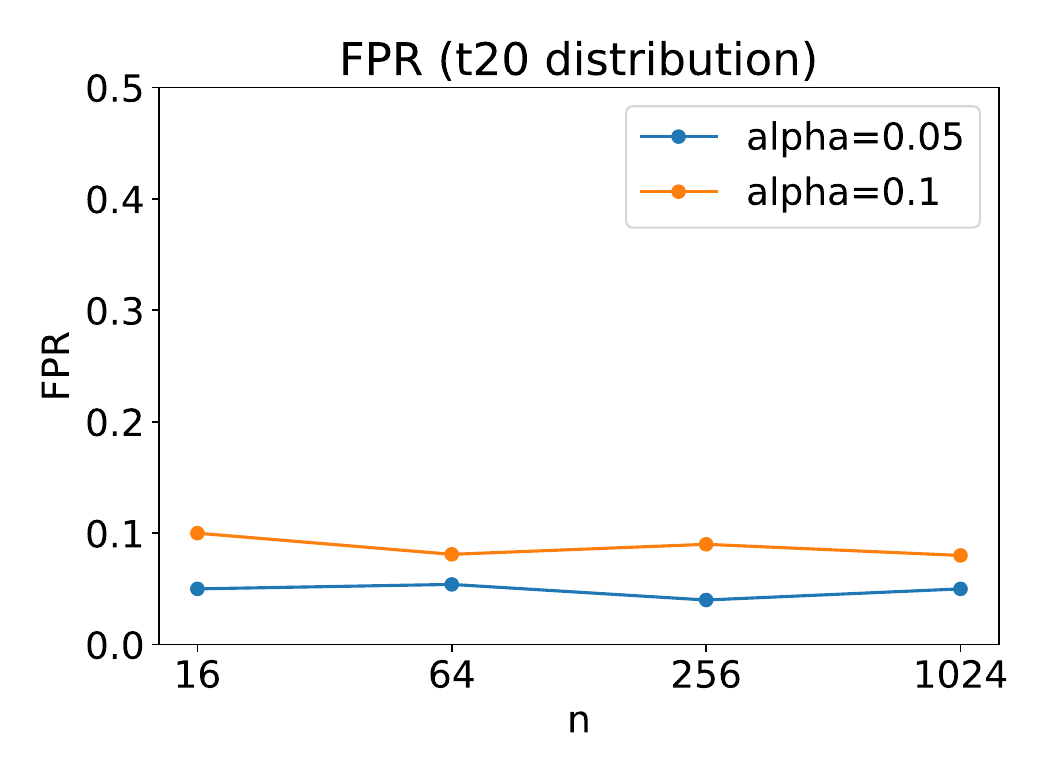}
\end{subfigure}
\begin{subfigure}{0.49\linewidth}
\centering
\includegraphics[width=\linewidth]{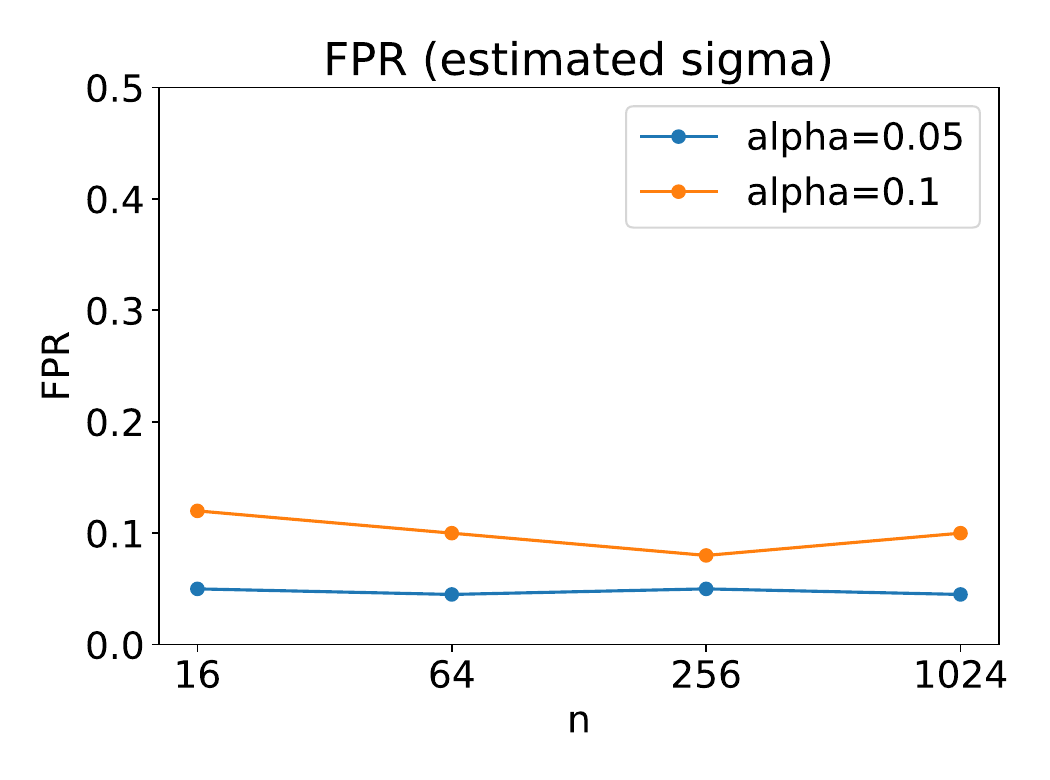}
\end{subfigure}
\caption{
Robustness of the proposed method.}
\label{fig:violate_assumption}
\end{figure}
\end{minipage}
\end{figure}

%
%
%
%

\begin{figure*}[!t]
\centering
\begin{subfigure}{.325\textwidth}
  \centering
  \includegraphics[width=.95\linewidth]{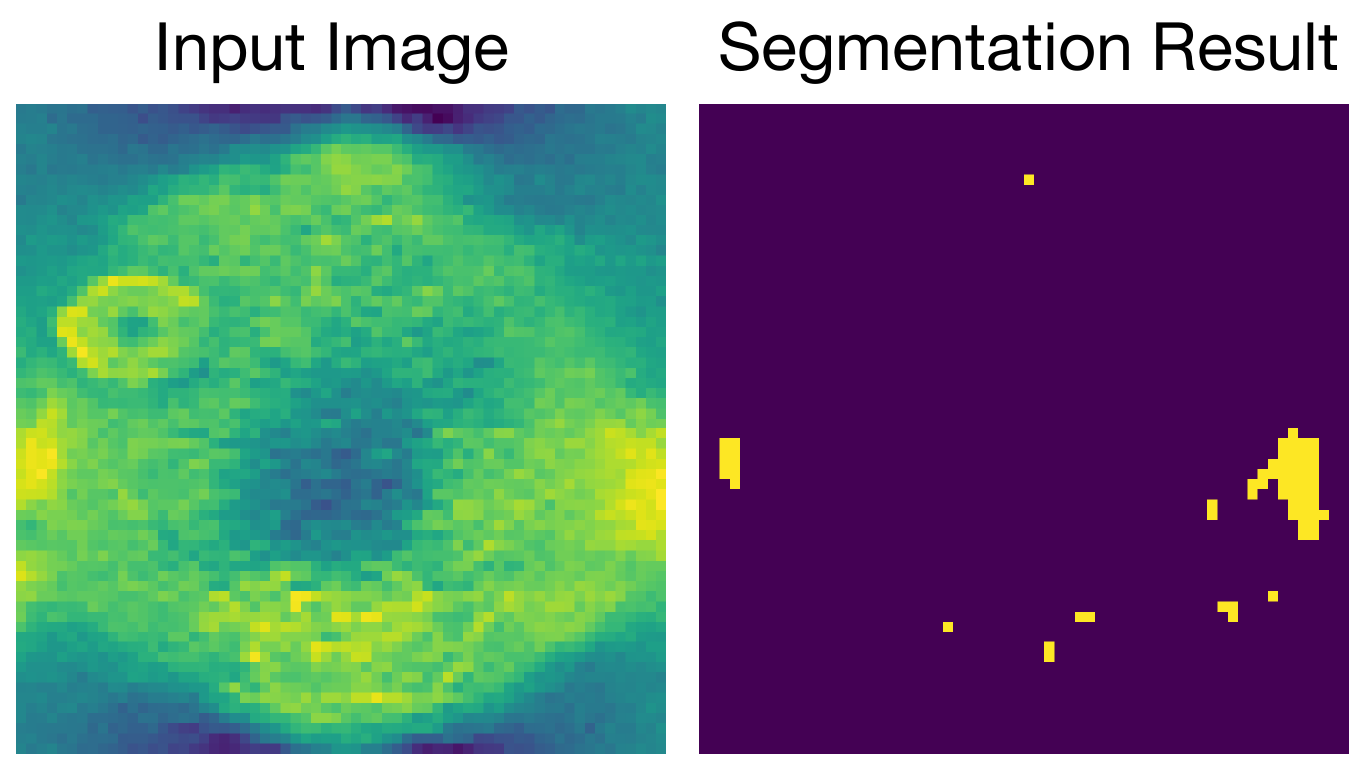}  
  \caption{$p_{\rm naive}$ = \textbf{0.00}, $p_{\rm selective}$ = \textbf{0.67}}
\end{subfigure}
\begin{subfigure}{.325\textwidth}
  \centering
  \includegraphics[width=.95\linewidth]{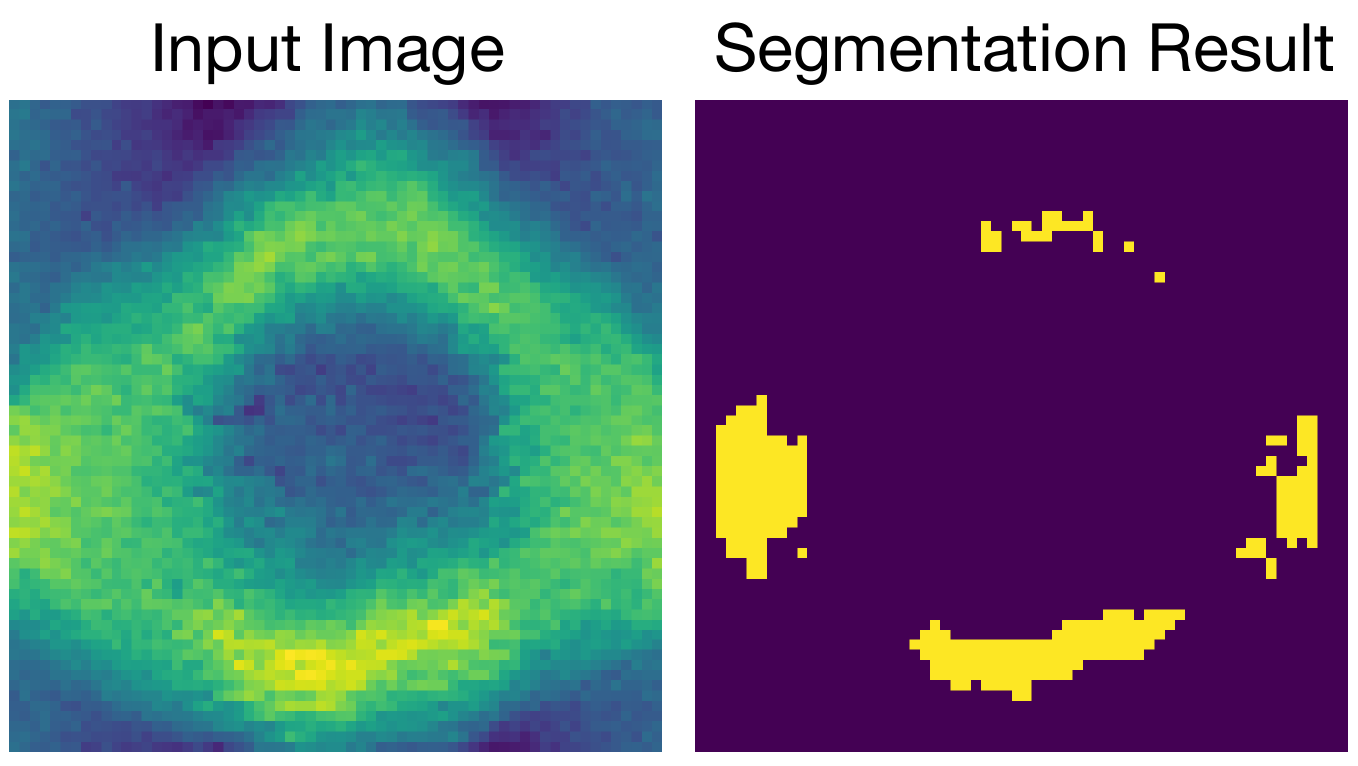}  
  \caption{$p_{\rm naive}$ = \textbf{0.00}, $p_{\rm selective}$ = \textbf{0.45}}
\end{subfigure}
\begin{subfigure}{.325\textwidth}
  \centering
  \includegraphics[width=.95\linewidth]{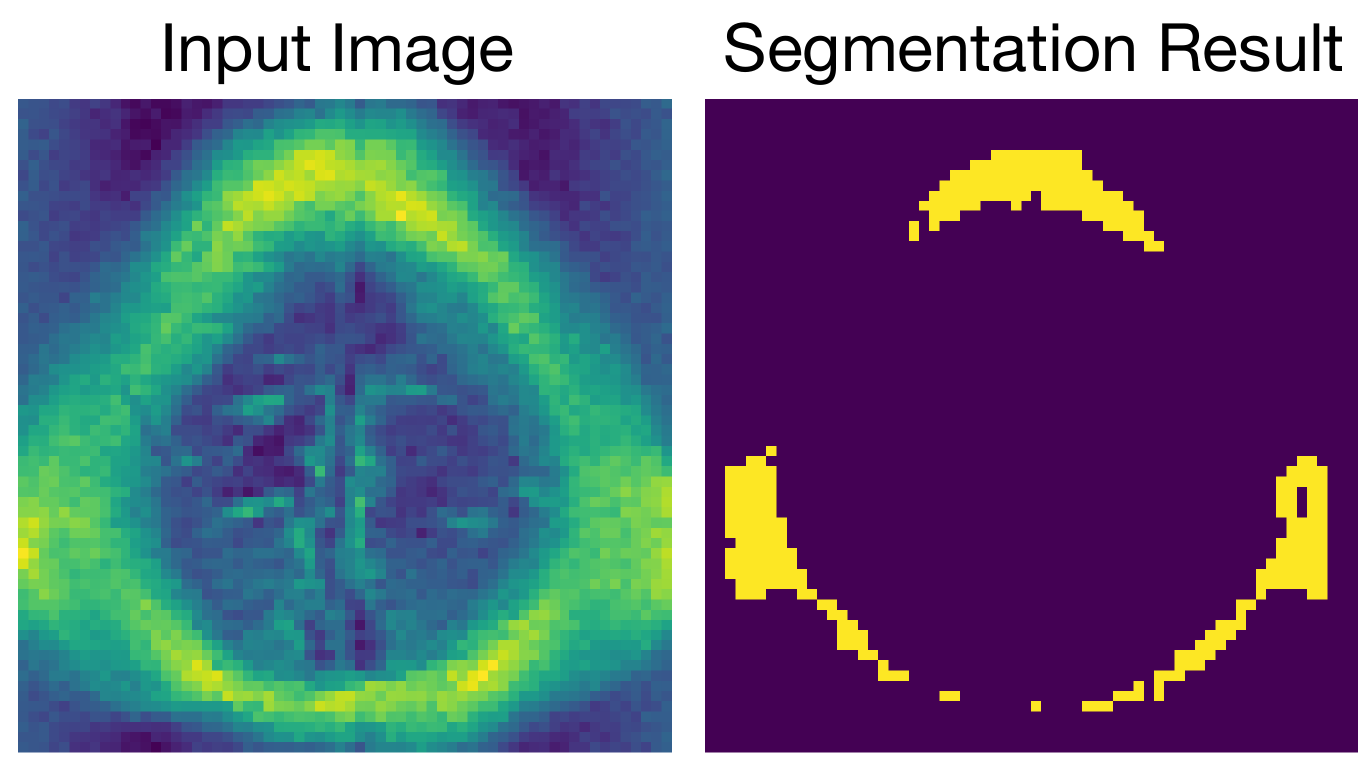}  
  \caption{$p_{\rm naive}$ = \textbf{0.00}, $p_{\rm selective}$ = \textbf{0.16}}
\end{subfigure}
\caption{Inference on segmentation results for images without a tumor region.}
\label{fig:sg_fp}
\end{figure*}

\begin{figure*}[!t]
\centering
\begin{subfigure}{.325\textwidth}
  \centering
  \includegraphics[width=.95\linewidth]{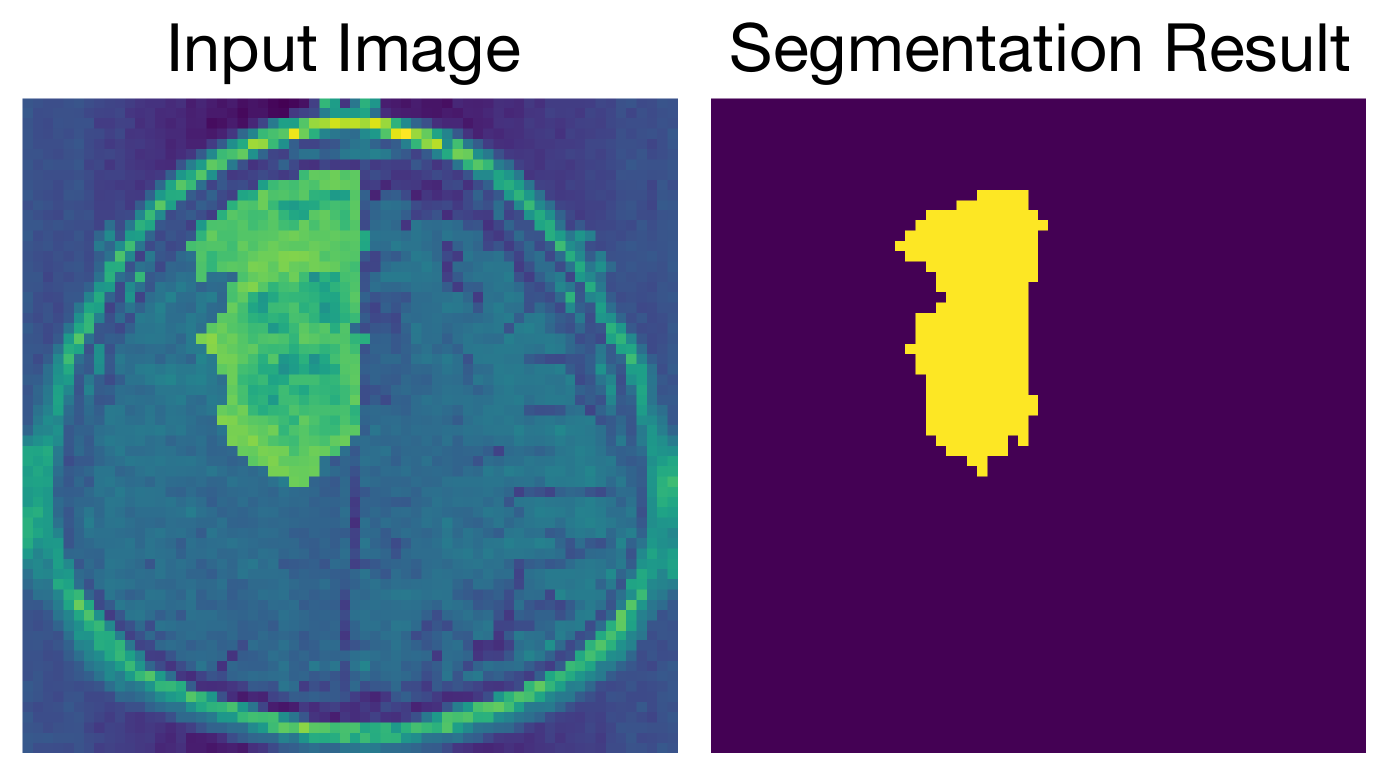}  
  \caption{$p_{\rm naive}$ = \textbf{0.00}, $p_{\rm selective}$ = \textbf{8.15e-5}}
\end{subfigure}
\begin{subfigure}{.325\textwidth}
  \centering
  \includegraphics[width=.95\linewidth]{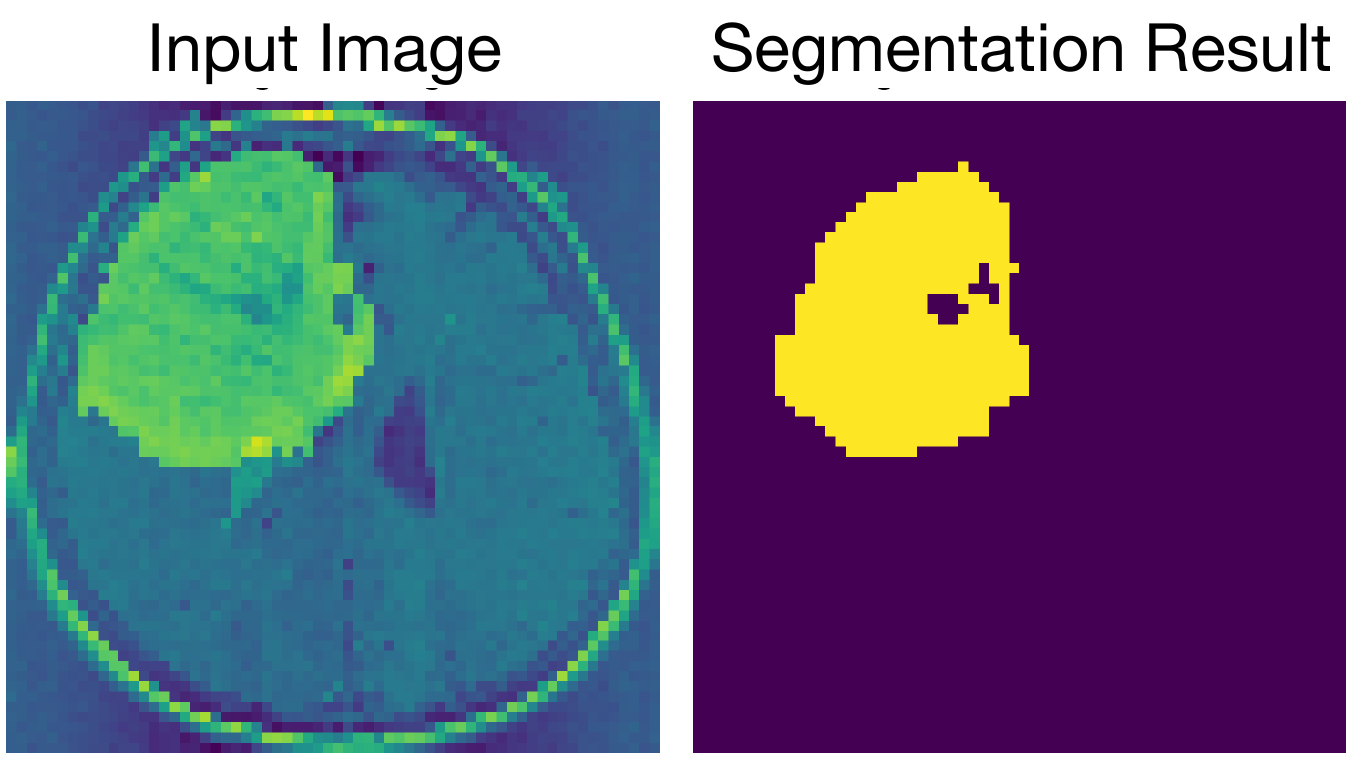}  
  \caption{$p_{\rm naive}$  = \textbf{0.00}, $p_{\rm selective}$ = \textbf{9.2e-3}}
\end{subfigure}
\begin{subfigure}{.325\textwidth}
  \centering
  \includegraphics[width=.95\linewidth]{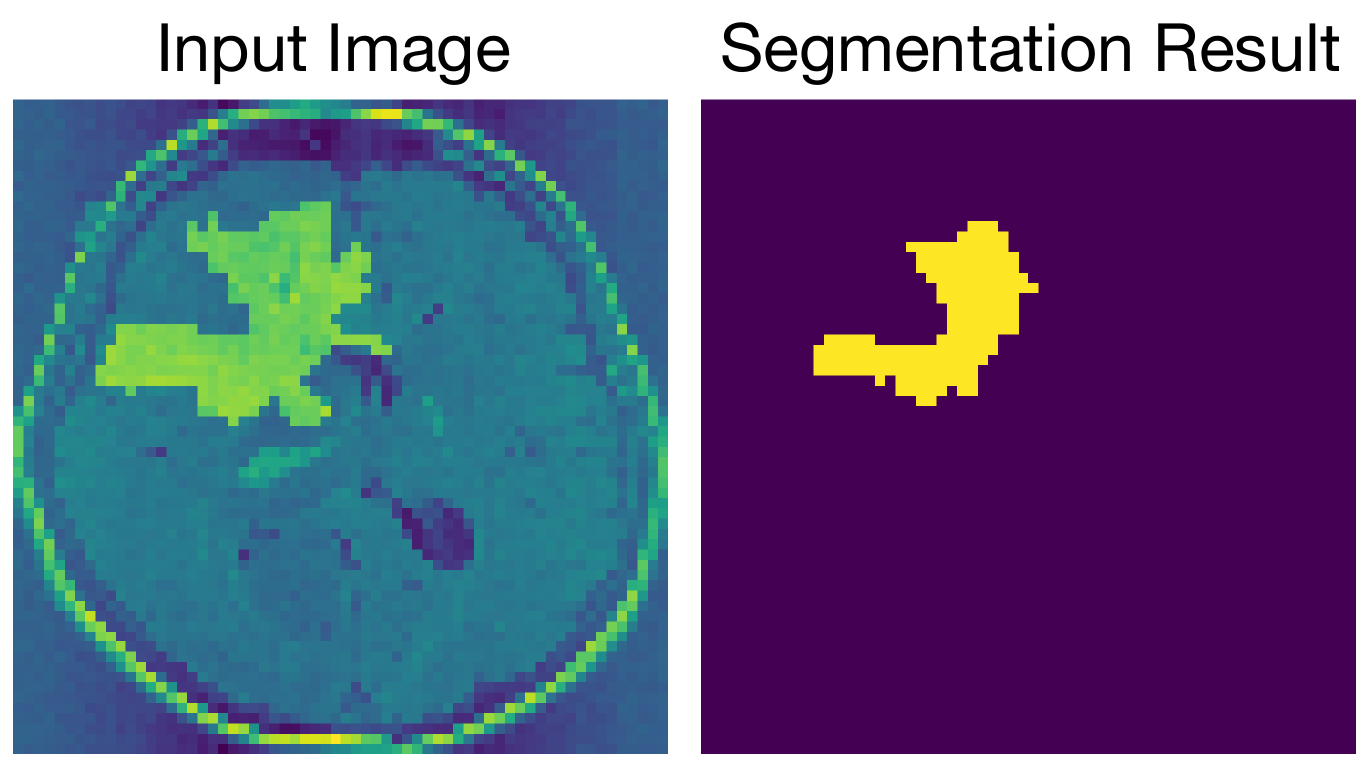}  
  \caption{$p_{\rm naive}$  = \textbf{0.00}, $p_{\rm selective}$ = \textbf{1.91e-2}}
\end{subfigure}
\caption{Inference on segmentation results for images where there exists a tumor region.}
\label{fig:sg_tp}
\end{figure*}

\textbf{Numerical results.} The results of the FPR control are shown in  Fig.~\ref{fig:fig_fpr}.
The proposed methods successfully control the FPR in both cases of independence and correlation while the naive and permutation methods \emph{cannot}. 
Because the naive and permutation methods fail to control the FPR, we no longer consider the power.
Fig.~\ref{fig:fig_tpr} shows that the homotopy method has higher TPR than the over-conditioning option.
%
%
Fig.~\ref{fig:fig_interval} shows the reason why the proposed homotopy method is efficient.
Intuitively, to overcome the over-conditioning issue, we need to consider all combinations of the activenesses of the nodes in a trained DNN, which is exponentially increasing.
With the homotopy method, we need only consider the number of encountered intervals on the line along the direction of the test statistic, which is almost linearly increasing in practice.
Additionally, we confirm the robustness of the proposed method in terms of FPR control by applying our method to data following Laplace distribution, skew normal distribution (skewness coefficient 10), and $t_{20}$ distribution.
We also consider the case where variance is also estimated from the data.
We test the FPR for both $\alpha = 0.05$ and $\alpha = 0.1$. 
The FPR results are shown in Fig. \ref{fig:violate_assumption}.
Our method still maintains good performance.
%
Regarding the last plot (estimated sigma) in Fig. \ref{fig:violate_assumption}, since we want to empirical check the performance on FPR control when breaking the assumption of $\Sigma$ (i.e., $\Sigma$ is known or estimated from independent data), we set $\Sigma = s^2  I $ where $s^2$ is the empirical (sample) variance and $I$ is the identity matrix.



\textbf{Real-data examples.} We examine the brain image dataset extracted from the dataset used in \cite{buda2019association}, which includes 939 images with tumors and 941 images without tumors.
We first compare our method to a permutation test in terms of FPR control.
The results are shown in Table \ref{tab:real_world}.
Because the permutation test cannot properly control the FPR, a comparison of power is no longer needed.
Comparisons of the naive $p$-value and the selective $p$-value are shown in Fig. \ref{fig:sg_fp} and Fig. \ref{fig:sg_tp}.
The naive $p$-value remains small, even when the image has no tumor region; this indicated that naive $p$-values cannot be used to quantify the reliability of DNN-based segmentation results.
The proposed method successfully identifies false positive and true positive detections.

\begin{table}[!t]
\caption{False positive rate and power comparisons in brain image dataset.}
\label{tab:real_world}
\begin{center}
\renewcommand{\arraystretch}{1.5}
\begin{tabular}{ |c|c|c| } 
         \hline
            & ~~~~ \textbf{FPR} ~~~~  & ~~~ \textbf{Power} ~~~  \\ 
         \hline
	\textbf{Proposed Method}  & 0.057& 0.683\\ 
        \hline
         \textbf{Permutation Test}  & 0.640 (unreliable) & N.A \\ 
         \hline
        \end{tabular}
\end{center}
\end{table}

\section{Discussion} \label{sec:discussion}

We propose a novel conditional SI method to conduct inference on the DNN-based image segmentation result.
We believe that this study stands as a significant step toward reliable artificial intelligence and opens several directions for statistically evaluating the reliability of DNN-driven hypotheses. Some open questions remain, as follows:

$\bullet$ The proposed method currently does not support softmax and batch normalization operators. 
Although these operators can in theory be approximated by affine operations, doing so will be challenging in reality, as we need to work in a multidimensional space.
Although it is difficult to apply the proposed method to network architectures containing components that are difficult to decompose into affine operations, one possible solution is to devise a new network architecture that can be expressed as a set of affine functions but still has state-of-the-art segmentation performance.

$\bullet$ Although there is no technical limitation in applying our proposed method to state-of-the-art network structures, it requires a considerable amount of implementation effort.
This is because, for a large and complex network, we have to implement the selection events for all the components in the network.
The future direction could be first to implement the proposed method for a few commonly used network structures, and then to develop a generic software tool that can automatically compute the selection event when the network structure is given.

$\bullet$ Even though this study mainly focuses on the image segmentation task, the proposed method can be applied to a class of problems where (1) the test statistic is defined as a linear contrast w.r.t the data and (2) the operations of the trained neural network can be characterized by a set of linear inequalities (or approximated by piecewise-linear functions).
Therefore, widening the applicability of the proposed method to other computer vision tasks---as well as to other fields such as natural language processing and signal processing---would also stand as a valuable future contribution.

\begin{ack}
We thank the anonymous reviewers and area chair for their comments.
 This work was partially supported by MEXT KAKENHI (20H00601), JST CREST (JPMJCR21D3), JST Moonshot R\&D (JPMJMS2033-05), JST AIP Acceleration Research (JPMJCR21U2), NEDO (JPNP18002, JPNP20006), RIKEN Center for Advanced Intelligence Project, and RIKEN Junior Research Associate Program.
\end{ack}


\medskip

{
\small
\bibliographystyle{abbrv}
\bibliography{ref}

\begin{thebibliography}{10}

\bibitem{badrinarayanan2017segnet}
V.~Badrinarayanan, A.~Kendall, and R.~Cipolla.
\newblock Segnet: A deep convolutional encoder-decoder architecture for image
  segmentation.
\newblock {\em IEEE transactions on pattern analysis and machine intelligence},
  39(12):2481--2495, 2017.

\bibitem{boykov2006graph}
Y.~Boykov and G.~Funka-Lea.
\newblock Graph cuts and efficient nd image segmentation.
\newblock {\em International journal of computer vision}, 70(2):109--131, 2006.

\bibitem{boykov2001interactive}
Y.~Y. Boykov and M.-P. Jolly.
\newblock Interactive graph cuts for optimal boundary \& region segmentation of
  objects in nd images.
\newblock In {\em Proceedings eighth IEEE international conference on computer
  vision. ICCV 2001}, volume~1, pages 105--112. IEEE, 2001.

\bibitem{buda2019association}
M.~Buda, A.~Saha, and M.~A. Mazurowski.
\newblock Association of genomic subtypes of lower-grade gliomas with shape
  features automatically extracted by a deep learning algorithm.
\newblock {\em Computers in biology and medicine}, 109:218--225, 2019.

\bibitem{chen2019valid}
S.~Chen and J.~Bien.
\newblock Valid inference corrected for outlier removal.
\newblock {\em Journal of Computational and Graphical Statistics}, pages 1--12,
  2019.

\bibitem{chen2022selective}
Y.~T. Chen and D.~M. Witten.
\newblock Selective inference for k-means clustering.
\newblock {\em arXiv preprint arXiv:2203.15267}, 2022.

\bibitem{das2021fast}
D.~Das, V.~N.~L. Duy, H.~Hanada, K.~Tsuda, and I.~Takeuchi.
\newblock Fast and more powerful selective inference for sparse high-order
  interaction model.
\newblock {\em arXiv preprint arXiv:2106.04929}, 2021.

\bibitem{duy2020computing}
V.~N.~L. Duy, H.~Toda, R.~Sugiyama, and I.~Takeuchi.
\newblock Computing valid p-value for optimal changepoint by selective
  inference using dynamic programming.
\newblock In {\em Advances in Neural Information Processing Systems}, pages
  11356--11367, 2020.

\bibitem{fithian2014optimal}
W.~Fithian, D.~Sun, and J.~Taylor.
\newblock Optimal inference after model selection.
\newblock {\em arXiv preprint arXiv:1410.2597}, 2014.

\bibitem{gao2022selective}
L.~L. Gao, J.~Bien, and D.~Witten.
\newblock Selective inference for hierarchical clustering.
\newblock {\em Journal of the American Statistical Association}, pages 1--11,
  2022.

\bibitem{hyun2018exact}
S.~Hyun, M.~G’sell, and R.~J. Tibshirani.
\newblock Exact post-selection inference for the generalized lasso path.
\newblock {\em Electronic Journal of Statistics}, 12(1):1053--1097, 2018.

\bibitem{hyun2018post}
S.~Hyun, K.~Lin, M.~G'Sell, and R.~J. Tibshirani.
\newblock Post-selection inference for changepoint detection algorithms with
  application to copy number variation data.
\newblock {\em arXiv preprint arXiv:1812.03644}, 2018.

\bibitem{jewell2019testing}
S.~Jewell, P.~Fearnhead, and D.~Witten.
\newblock Testing for a change in mean after changepoint detection.
\newblock {\em arXiv preprint arXiv:1910.04291}, 2019.

\bibitem{kriegeskorte2009circular}
N.~Kriegeskorte, W.~K. Simmons, P.~S. Bellgowan, and C.~I. Baker.
\newblock Circular analysis in systems neuroscience: the dangers of double
  dipping.
\newblock {\em Nature neuroscience}, 12(5):535--540, 2009.

\bibitem{le2021parametric}
V.~N. Le~Duy and I.~Takeuchi.
\newblock Parametric programming approach for more powerful and general lasso
  selective inference.
\newblock In {\em International Conference on Artificial Intelligence and
  Statistics}, pages 901--909. PMLR, 2021.

\bibitem{le2022more}
V.~N. Le~Duy and I.~Takeuchi.
\newblock More powerful conditional selective inference for generalized lasso
  by parametric programming.
\newblock {\em Journal of Machine Learning Research}, 23(300):1--37, 2022.

\bibitem{lee2016exact}
J.~D. Lee, D.~L. Sun, Y.~Sun, and J.~E. Taylor.
\newblock Exact post-selection inference, with application to the lasso.
\newblock {\em The Annals of Statistics}, 44(3):907--927, 2016.

\bibitem{liu2018more}
K.~Liu, J.~Markovic, and R.~Tibshirani.
\newblock More powerful post-selection inference, with application to the
  lasso.
\newblock {\em arXiv preprint arXiv:1801.09037}, 2018.

\bibitem{long2015fully}
J.~Long, E.~Shelhamer, and T.~Darrell.
\newblock Fully convolutional networks for semantic segmentation.
\newblock In {\em Proceedings of the IEEE conference on computer vision and
  pattern recognition}, pages 3431--3440, 2015.

\bibitem{neufeld2022tree}
A.~C. Neufeld, L.~L. Gao, and D.~M. Witten.
\newblock Tree-values: selective inference for regression trees.
\newblock {\em Journal of Machine Learning Research}, 23(305):1--43, 2022.

\bibitem{nock2004statistical}
R.~Nock and F.~Nielsen.
\newblock Statistical region merging.
\newblock {\em IEEE Transactions on pattern analysis and machine intelligence},
  26(11):1452--1458, 2004.

\bibitem{otsu1979threshold}
N.~Otsu.
\newblock A threshold selection method from gray-level histograms.
\newblock {\em IEEE transactions on systems, man, and cybernetics},
  9(1):62--66, 1979.

\bibitem{panigrahi2020integrative}
S.~Panigrahi, S.~Mohammed, A.~Rao, and V.~Baladandayuthapani.
\newblock Integrative bayesian models using post-selective inference: a case
  study in radiogenomics.
\newblock {\em arXiv preprint arXiv:2004.12012}, 2020.

\bibitem{panigrahi2022approximate}
S.~Panigrahi and J.~Taylor.
\newblock Approximate selective inference via maximum likelihood.
\newblock {\em Journal of the American Statistical Association},
  (just-accepted):1--24, 2022.

\bibitem{ronneberger2015u}
O.~Ronneberger, P.~Fischer, and T.~Brox.
\newblock U-net: Convolutional networks for biomedical image segmentation.
\newblock In {\em International Conference on Medical image computing and
  computer-assisted intervention}, pages 234--241. Springer, 2015.

\bibitem{sugiyama2020more}
K.~Sugiyama, V.~N.~L. Duy, and I.~Takeuchi.
\newblock More powerful and general selective inference for stepwise feature
  selection using the homotopy continuation approach.
\newblock In {\em Proceedings of the 38th International Conference on Machine
  Learning}, 2021.

\bibitem{sugiyama2021valid}
R.~Sugiyama, H.~Toda, V.~N.~L. Duy, Y.~Inatsu, and I.~Takeuchi.
\newblock Valid and exact statistical inference for multi-dimensional multiple
  change-points by selective inference.
\newblock {\em arXiv preprint arXiv:2110.08989}, 2021.

\bibitem{suzumura2017selective}
S.~Suzumura, K.~Nakagawa, Y.~Umezu, K.~Tsuda, and I.~Takeuchi.
\newblock Selective inference for sparse high-order interaction models.
\newblock In {\em Proceedings of the 34th International Conference on Machine
  Learning-Volume 70}, pages 3338--3347. JMLR. org, 2017.

\bibitem{tanizaki2020computing}
K.~Tanizaki, N.~Hashimoto, Y.~Inatsu, H.~Hontani, and I.~Takeuchi.
\newblock Computing valid p-values for image segmentation by selective
  inference.
\newblock In {\em Proceedings of the IEEE/CVF Conference on Computer Vision and
  Pattern Recognition}, pages 9553--9562, 2020.

\bibitem{tibshirani2016exact}
R.~J. Tibshirani, J.~Taylor, R.~Lockhart, and R.~Tibshirani.
\newblock Exact post-selection inference for sequential regression procedures.
\newblock {\em Journal of the American Statistical Association},
  111(514):600--620, 2016.

\bibitem{tsukurimichi2021conditional}
T.~Tsukurimichi, Y.~Inatsu, V.~N.~L. Duy, and I.~Takeuchi.
\newblock Conditional selective inference for robust regression and outlier
  detection using piecewise-linear homotopy continuation.
\newblock {\em arXiv preprint arXiv:2104.10840}, 2021.

\bibitem{umezu2017selective}
Y.~Umezu and I.~Takeuchi.
\newblock Selective inference for change point detection in multi-dimensional
  sequences.
\newblock {\em arXiv preprint arXiv:1706.00514}, 2017.

\bibitem{valiollahi2021change}
R.~Valiollahi~Mehrizi.
\newblock Change point analysis in piecewise polynomial signals using trend
  filtering.
\newblock 2021.

\end{thebibliography}
}

\section*{Checklist}


\begin{enumerate}

\item For all authors...
\begin{enumerate}
  \item Do the main claims made in the abstract and introduction accurately reflect the paper's contributions and scope?
    \answerYes{}
  \item Did you describe the limitations of your work?
    \answerYes{Please see Sec. \ref{sec:discussion}.}
  \item Did you discuss any potential negative societal impacts of your work?
    \answerNA{}
  \item Have you read the ethics review guidelines and ensured that your paper conforms to them?
    \answerYes{}
\end{enumerate}

\item If you are including theoretical results...
\begin{enumerate}
  \item Did you state the full set of assumptions of all theoretical results?
    \answerYes{}
        \item Did you include complete proofs of all theoretical results?
    \answerYes{}
\end{enumerate}

\item If you ran experiments...
\begin{enumerate}
  \item Did you include the code, data, and instructions needed to reproduce the main experimental results (either in the supplemental material or as a URL)?
    \answerYes{}
  \item Did you specify all the training details (e.g., data splits, hyperparameters, how they were chosen)?
    \answerYes{}
        \item Did you report error bars (e.g., with respect to the random seed after running experiments multiple times)?
    \answerYes{}
        \item Did you include the total amount of compute and the type of resources used (e.g., type of GPUs, internal cluster, or cloud provider)?
    \answerYes{}
\end{enumerate}

\item If you are using existing assets (e.g., code, data, models) or curating/releasing new assets...
\begin{enumerate}
  \item If your work uses existing assets, did you cite the creators?
    \answerYes{}
  \item Did you mention the license of the assets?
    \answerYes{Please see Appendix \ref{appendix:detailed_setup}.}
  \item Did you include any new assets either in the supplemental material or as a URL?
    \answerNA{}
  \item Did you discuss whether and how consent was obtained from people whose data you're using/curating?
    \answerNA{}
  \item Did you discuss whether the data you are using/curating contains personally identifiable information or offensive content?
    \answerNA{}
\end{enumerate}

\item If you used crowdsourcing or conducted research with human subjects...
\begin{enumerate}
  \item Did you include the full text of instructions given to participants and screenshots, if applicable?
    \answerNA{}
  \item Did you describe any potential participant risks, with links to Institutional Review Board (IRB) approvals, if applicable?
    \answerNA{}
  \item Did you include the estimated hourly wage paid to participants and the total amount spent on participant compensation?
    \answerNA{}
\end{enumerate}

\end{enumerate}


\newpage
\appendix

\section{Proof of Lemma \ref{lemma:data_line}} \label{appendix:proof_data_line}

According to the second condition in \eq{eq:conditional_data_space}, we have 
\begin{align*}
	&\bm q(\bm x) = \bm q(\bm x^{\rm {obs}}) \\ 
	\Leftrightarrow ~
	&\Big ( I_n - \bm c \bm \eta^\top \Big ) 
	\bm x
	= 
	\bm q(\bm x^{\rm {obs}}) \\ 
	\Leftrightarrow ~
	&\bm x
	= 
	\bm q(\bm x^{\rm obs})
	+ \frac{\Sigma \bm \eta}{\bm \eta^\top \Sigma \bm \eta}
	\bm \eta^\top  
	\bm x.
\end{align*}
By defining 
$\bm a = \bm q(\bm x^{\rm {obs}})$,
$\bm b = \Sigma \bm \eta (\bm \eta^\top \Sigma \bm \eta)^{-1}$,
$z = \bm \eta^\top  \bm x $, 
$\bm x(z) = \bm a + \bm b z$, 
and incorporating the first condition in \eq{eq:conditional_data_space}, we obtain the result in Lemma \ref{lemma:data_line}. 

\section{Proof of Lemma \ref{lemma_homotopy}} \label{appendix:proof_lemma_2}

Given a real value $z_t$, we can always obtain $\Theta^{(\bm s(\bm x (z_t)))} \bm x(z_t) \leq \bm \psi^{(\bm s(\bm x (z_t)))} $ when applying the trained neural network on $\bm x(z_t)$.
Then, for any $z \in \RR$, if $\Theta^{(\bm s(\bm x (z_t)))} \bm x(z) \leq \bm \psi^{(\bm s(\bm x (z_t)))}$, $\cA(z_t) = \cA(z)$ and $\bm s(z_t) = \bm s(z)$.
By decomposing $\bm x(z) = \bm a + \bm b z$, we have 
\begin{align*}
	\{ \Theta^{(\bm s(\bm x (z_t)))} \bm x(z) \leq \bm \psi^{(\bm s(\bm x (z_t)))} \} &=  \{ \Theta^{(\bm s(\bm x (z_t)))} (\bm a + \bm b z) \leq \bm \psi^{(\bm s(\bm x (z_t)))} \} \\ 
	&= \{\Theta^{(\bm s(\bm x (z_t)))} \bm b z \leq  \bm \psi^{(\bm s(\bm x (z_t)))} - \Theta^{(\bm s(\bm x (z_t)))} \bm a\}\\
	&= \{(\Theta^{(\bm s(\bm x (z_t)))} \bm b)_k z \leq  \psi^{(\bm s(\bm x (z_t)))}_k - (\Theta^{(\bm s(\bm x (z_t)))} \bm a)_k ~\text{ for all } k\}.
\end{align*}
Then, the breakpoint $z_{t+1} > z_t$ at which one node changes its status, i.e., the sign of one inequality is going to be changed, is computed as 
\begin{align*}
	z_{t+1} =  \min \limits_{k:(\Theta^{(\bm s(\bm x (z_t)))} \bm b)_k > 0} \frac{\psi^{(\bm s(\bm x (z_t)))}_k - (\Theta^{(\bm s(\bm x (z_t)))} \bm a)_k}{(\Theta^{(\bm s(\bm x (z_t)))} \bm b)_k}.
\end{align*}
Hence, we have the result of Lemma \ref{lemma_homotopy}.

\section{Distribution of naive $p$-value and selective $p$-value when the null hypothesis is true}

We demonstrate the validity of our proposed method by confirming the uniformity of $p$-value when the null hypothesis is true. 
We generated 12,000 null images $\bm x = (x_1, ..., x_n)$ in which $x_i \in [n] \sim \NN(0,1)$ for each case $n \in  \{16, 64, 256\} $ and performed the experiments to check the distribution of naive $p$-values and selective $p$-values. 
From Figure \ref{fig:naive_p_distribution}, it is obvious that naive $p$-value does not follow uniform distribution. 
Therefore, it fails to control the false positive rate. 
The empirical distributions of selective $p$-value are shown in Figure \ref{fig:selective_p_distribution}. 
The results indicate our proposed method successfully controls the false detection probability.

\begin{figure}[h]
\begin{subfigure}{.32\textwidth}
  \centering
  \includegraphics[width=\linewidth]{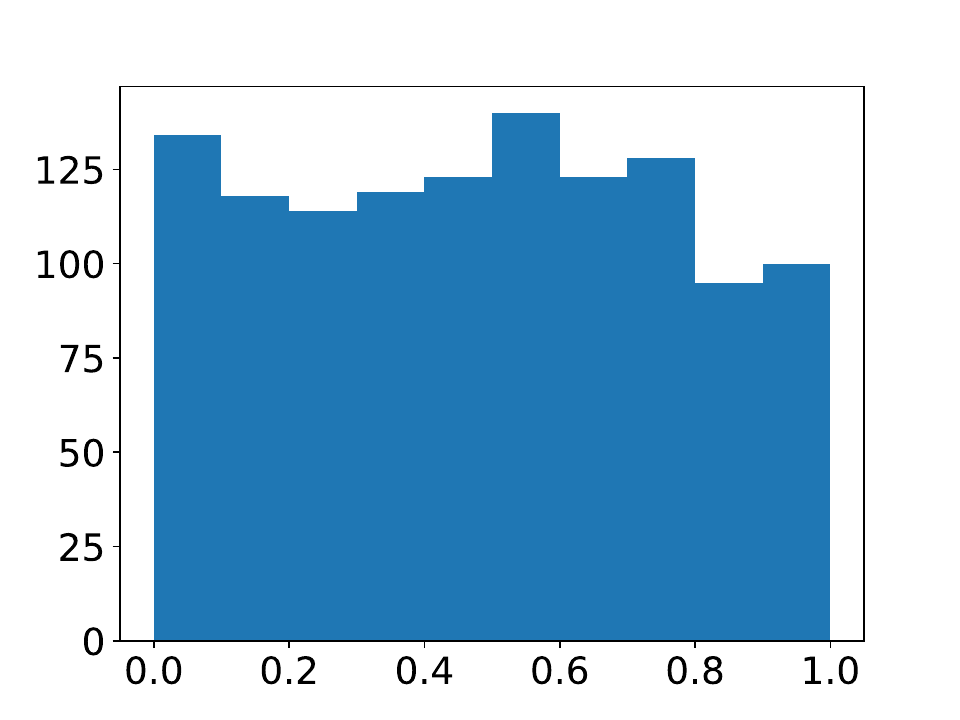}  
  \caption{$n=16$}
\end{subfigure}
\begin{subfigure}{.32\textwidth}
  \centering
  \includegraphics[width=\linewidth]{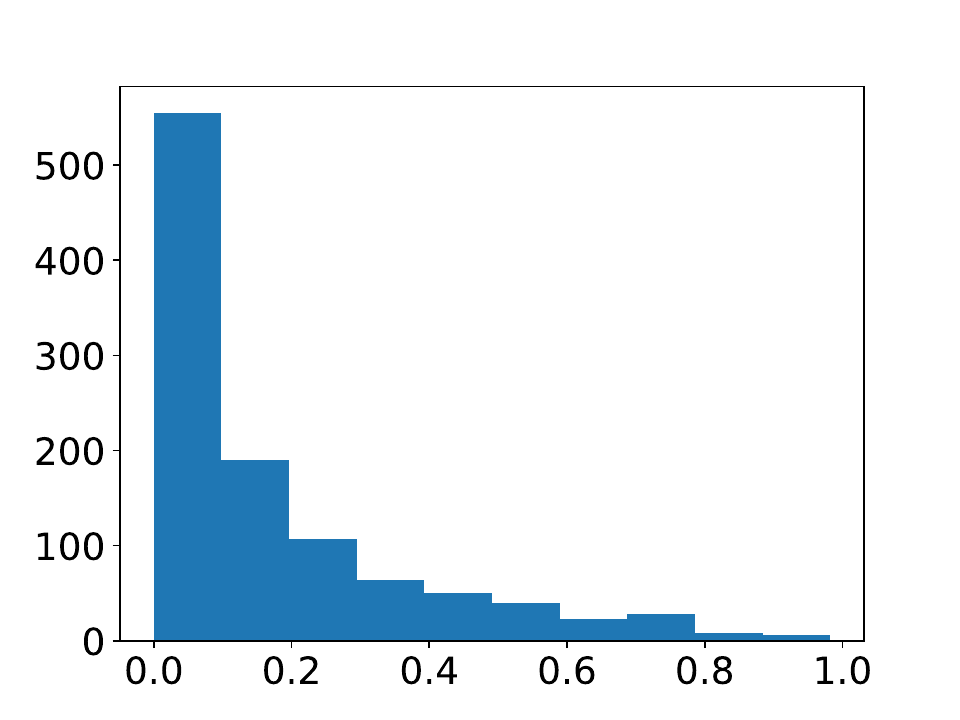}  
  \caption{$n=64$}
\end{subfigure}
\begin{subfigure}{.32\textwidth}
  \centering
  \includegraphics[width=\linewidth]{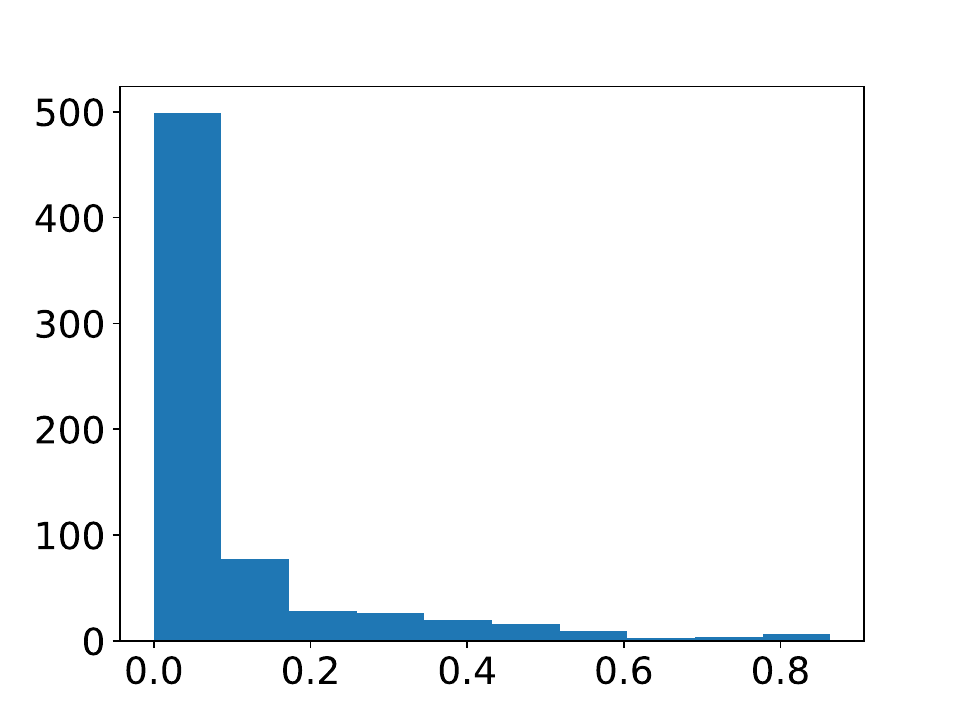}  
  \caption{$n=256$}
\end{subfigure}
\caption{Distribution of naive $p$-value when the null hypothesis is true.}
\label{fig:naive_p_distribution}
\end{figure}

\begin{figure}[h]
\begin{subfigure}{.32\textwidth}
  \centering
  \includegraphics[width=\linewidth]{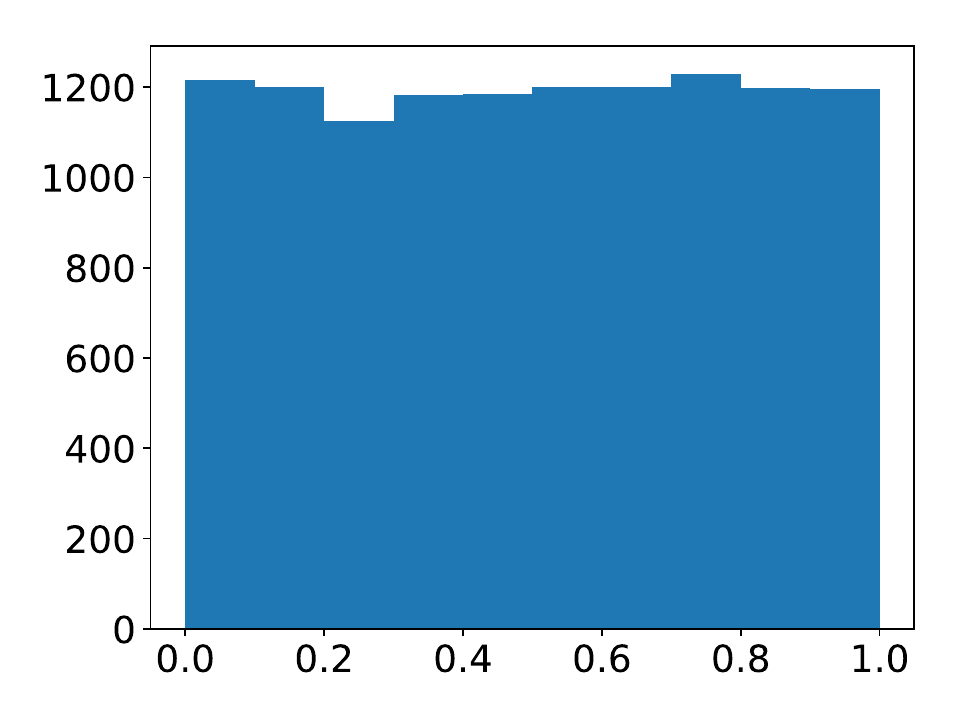}  
  \caption{$n=16$}
\end{subfigure}
\begin{subfigure}{.32\textwidth}
  \centering
  \includegraphics[width=\linewidth]{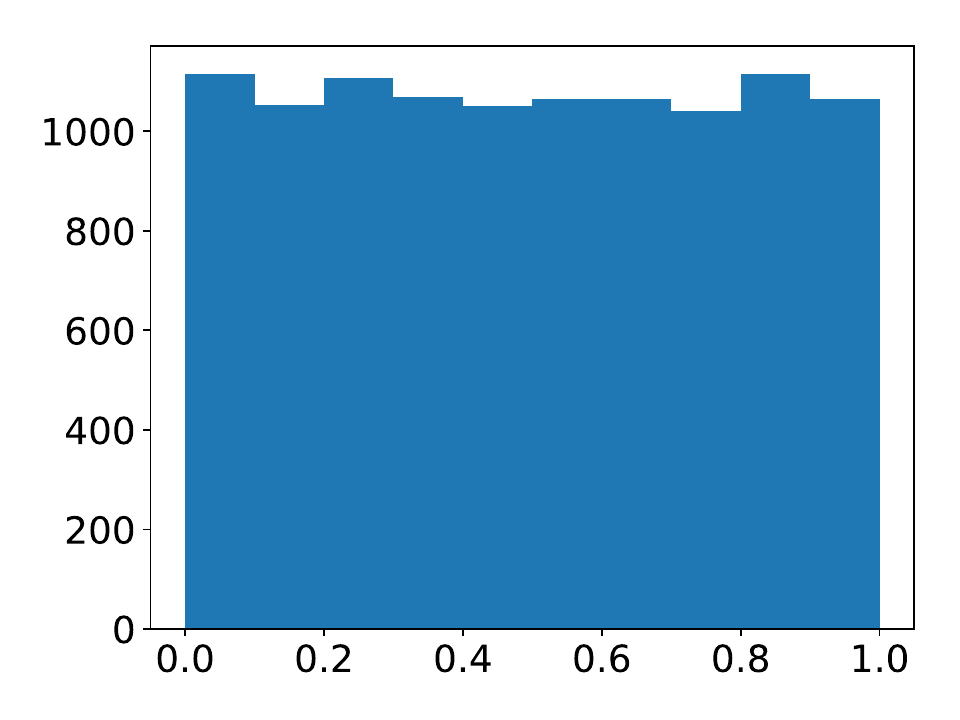}  
  \caption{$n=64$}
\end{subfigure}
\begin{subfigure}{.32\textwidth}
  \centering
  \includegraphics[width=\linewidth]{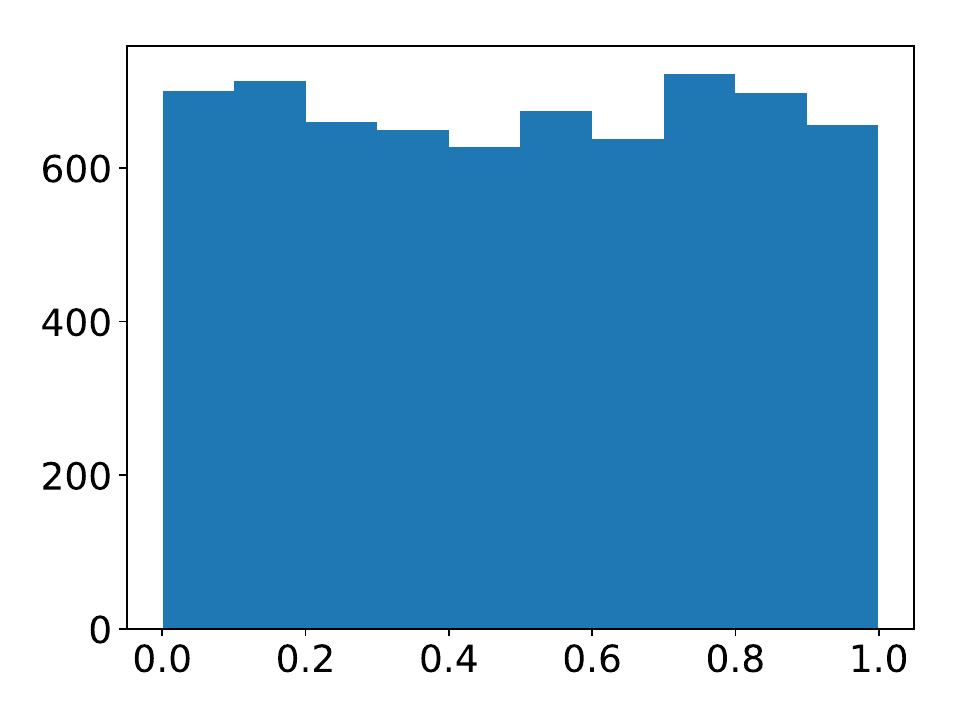}  
  \caption{$n=256$}
\end{subfigure}
\caption{Distribution of selective $p$-value when the null hypothesis is true.}
\label{fig:selective_p_distribution}
\end{figure}

\section{Examples of piecewise linear linear approximation for non-piecewise linear activation functions.} \label{appendix:linear_approximation}
For simplicity, we consider a simple 3-layer neural network where the activation function at hidden layer is either sigmoid or tanh, the number of input nodes and output nodes are 8, and the number of hidden nodes is 16.
Since these functions are non-piecewise linear functions, then we can use the following approximation 
\begin{align*}
    f(x) &= {\rm sigmoid}(x) = 
    	\begin{cases}
        0 & \text{if } x < -4,\\
        \frac{1}{8} x + \frac{1}{2} & \text{if } -4 \leq x \leq 4, \\
        1 & \text{if } x > 4.
        \end{cases}\\
     f(x) &= {\rm tanh}(x) = 
    	\begin{cases}
        -1 & \text{if } x < -2,\\
        \frac{1}{2} x  & \text{if } -2 \leq x \leq 2, \\
        1 & \text{if } x > 2.
        \end{cases}  
\end{align*}
With the above approximations, we demonstrate that our method still can control the FPR by showing the uniform QQ-plot of the pivot, which is the $p$-value under the null hypothesis, in Figure \ref{fig:linear_approx}.

\begin{figure}[t]
\centering
\begin{subfigure}{.45\textwidth}
  \centering
  \includegraphics[width=.85\linewidth]{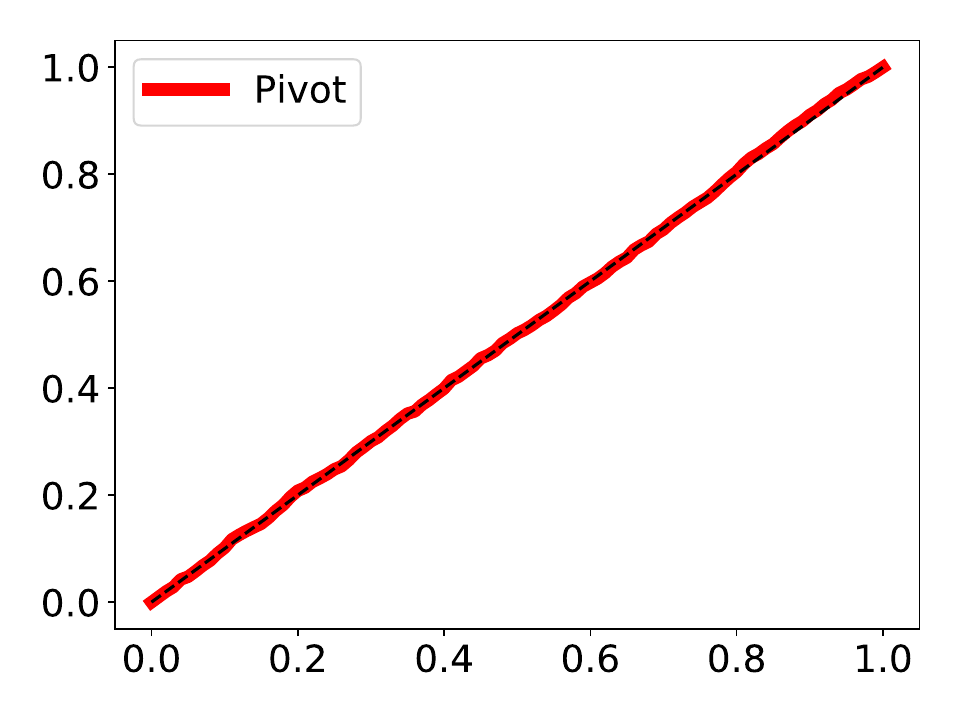}  
  \caption{Result for the piecewise linear approximation of sigmoid function.}
\end{subfigure}
\hspace{5pt}
\begin{subfigure}{.45\textwidth}
  \centering
  \includegraphics[width=.85\linewidth]{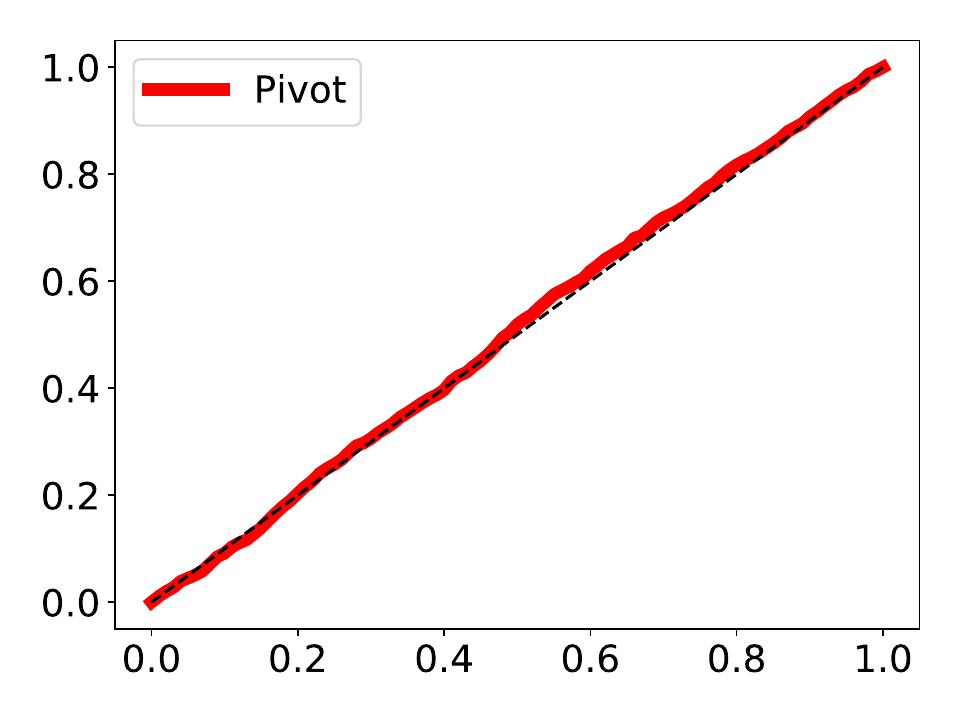}  
  \caption{Result for the piecewise linear approximation tanh function.}
\end{subfigure}
\caption{Uniform QQ-plot of the pivot.}
\label{fig:linear_approx}
\end{figure}

\begin{figure}[t]
\centering
\begin{subfigure}{.45\textwidth}
  \centering
  \includegraphics[width=\linewidth]{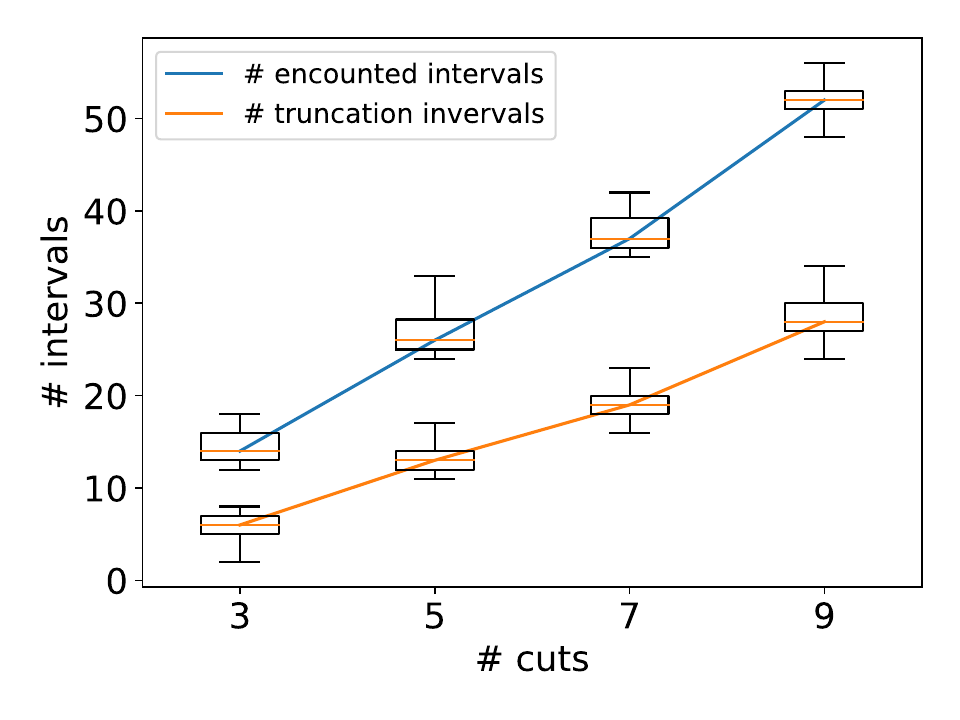}  
  \caption{\# hidden node = 8}
\end{subfigure}
\hspace{5pt}
\begin{subfigure}{.45\textwidth}
  \centering
  \includegraphics[width=\linewidth]{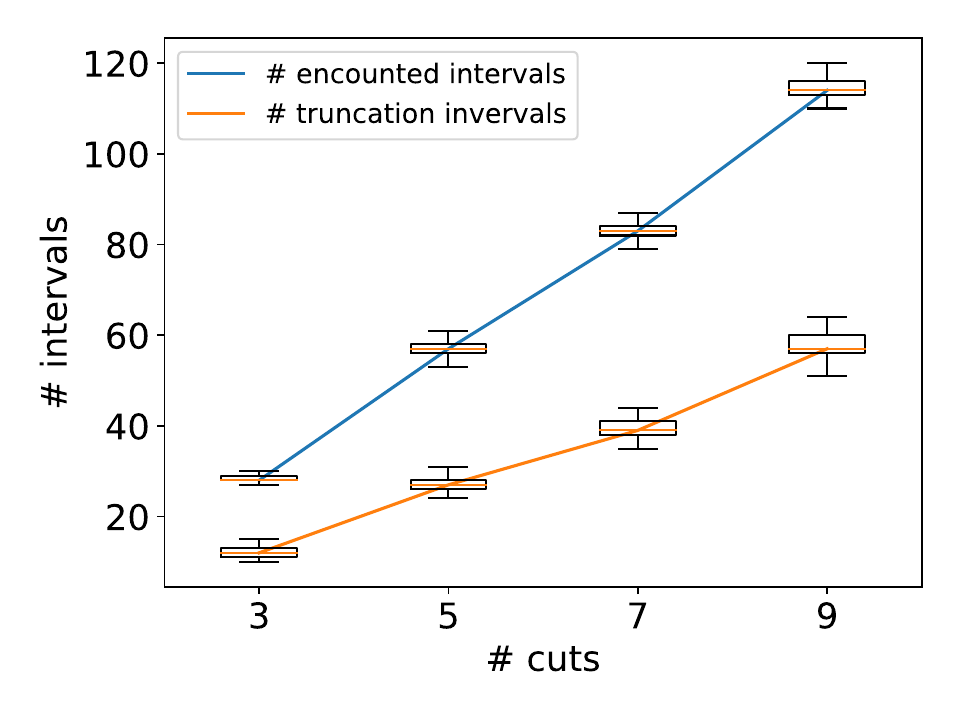}  
  \caption{\# hidden node = 8}
\end{subfigure}
\caption{Demonstration of \# encountered and \# truncation intervals when increasing \# cuts (pieces).}
\label{fig:encounted_increase_cuts}
\end{figure}

In the above example, we used 3 cuts (pieces) to approximate the function.
Theoretically, when we increase \# cuts, the number of encountered intervals tend to exponentially increase.
However, in Figure \ref{fig:encounted_increase_cuts}, we show that \# encountered intervals still linearly increase in practice.

\section{List of components} \label{app:list_components}
Most of components used in DNN for segmentation can be represented as a set of linear inequalities as shown in Table \ref{tab:component}.
\begin{table}[!t]
	\caption{Commonly used components in a DNN}
	\label{tab:component}
	\renewcommand{\arraystretch}{1.5}
	\begin{center}
	\begin{tabular}{| c|c | }
		\hline
		Components & Operation \\ \hline \hline
		Convolution & linear \\
		\hline
		ReLU activation function & piecewise-linear \\
		\hline
		Max-pooling & comparison \\
		\hline
		Upsampling & linear \\
		\hline
		Thresholding & comparison \\
		\hline
		Concatenate & linear \\
		\hline
		Addition & linear \\
		\hline
	\end{tabular}
	\end{center}
\end{table}

\section{Details for experimental setup} \label{appendix:detailed_setup}

We executed the code on Intel(R) Xeon(R) CPU E5-2687W v4 @ 3.00GHz.

The brain image dataset that we used in this paper is extracted from the dataset used in  \cite{buda2019association}, which is available under CC BY-NC-SA 4.0 license.

\paragraph{Methods for comparison.} We compare the \emph{proposed method} (homotopy method), which is the main focus of this paper, with the following approaches:

\begin{itemize}
\item \emph{Proposed-method-oc (over-conditioning):} This is our first idea of characterizing the conditional data space by additionally conditioning on the observed activeness and inactiveness of all the nodes.
The major limitation of this method is its low statistical power due to over-conditioning.
The over-conditioning selective $p$-value is computed by \eq{eq:selective_p_oc}.

\item \emph{Naive method:} This method uses the classical $z$-test to calculate the naive $p$-value.

\item  \emph{Permutation test:} This too is a well-known technique for computing the $p$-value.
The permutation testing procedure is described as follows:
\begin{itemize}
	\item[--] Compute the observed test statistic $T^{\rm obs}$ by applying the trained DNN on $\bm x^{\rm obs}$
	\item[--] For $1 \leftarrow b$ to $B$ ($B$ is the number of permutations which is given by user)
		\begin{itemize}
			\item[$+$] $\bm x^{(b)} \leftarrow {\rm permute} (\bm x^{\rm obs})$
			\item[$+$] Compute $T^{(b)}$ by applying the trained DNN on $\bm x^{(b)} $
		\end{itemize}
	\item[--] Compute the $p$-value as follows:
	\begin{align*}
		p_{\rm permutation} = \frac{1}{B} \sum \limits_{b=1}^B \bm{1} \{ | T^{\rm obs} | \leq |T^{(b)}|\},
	\end{align*}
	where $\bm 1 \{ \cdot\} $ is the indicator function.
\end{itemize}

\end{itemize}

\paragraph{Definition of power.} Since we conduct statistical testing only when there is a segmentation result discovered by the DNNs, we define the power as follows.
\begin{equation*}
	{\rm Power} = \frac{{\rm \#\ detected\ \&\ rejected}}{{\rm \#\ detected}}.
\end{equation*}
This definition of power is the same as that in \cite{hyun2018post} and 
\cite{duy2020computing}, where ${\rm \#\ detected}$ is the number of images on which a segmentation result is obtained by the trained DNN and ${\rm \#\ rejected}$ is the number of images whose null hypothesis is rejected by our proposed method.

\end{document}